\pgfplotsset{compat=1.10}
\newcommand{%
%  \tikzexternalenable
%  \tikzsetnextfilename{#1}%
  \input{tikz/.tikz}%
%  \tikzexternaldisable
}[1]{%
%  \tikzexternalenable
%  \tikzsetnextfilename{#1}%
  \input{tikz/#1.tikz}%
%  \tikzexternaldisable
}
\newglossaryentry{gp}
{
	name={GP},
	description={Gaussian Process},
	first={Gaussian Process (GP)},
	plural={GPs},
	descriptionplural={Gaussian Processes},
	firstplural={Gaussian Processes (\glsentryplural{gp})}
}
\newcommand{\N}{{\mathbb{N}}}
\newcommand{\Z}{{\mathbb{Z}}}
\newcommand{\Q}{{\mathbb{Q}}}
\newcommand{\R}{{\mathbb{R}}}
\newcommand{\F}{\ensuremath{\mathcal{F}}}
\newcommand{\G}{{\mathcal{G}}}
\renewcommand{\H}{{\mathcal{H}}}
\newcommand{\I}{{\mathcal{I}}}
\newcommand{\V}{{\mathcal{V}}}
\DeclareMathOperator{\GP}{\mathcal{GP}}
\DeclareMathOperator{\sol}{sol}
\DeclareMathOperator{\Hom}{Hom}
\newcommand{\domain}{D}
\newcommand{\diffalg}{K}
\newcommand{\derivation}{\delta}
\newcommand{\diffprimeideal}{P}
\newcommand{\diffindet}{\partial}
\newcommand{\ringdiffop}{R}
\newcommand{\idealdiffop}{I}
\newcommand{\matrixrepres}{A}
\DeclareMathOperator{\Mon}{Mon}
\DeclareMathOperator{\lm}{lm}
\DeclareMathOperator{\lc}{lc}
\newcommand{\exponentvector}{\varepsilon}
\newcommand{\multiindexone}{\textbf{1}}
\newtheorem{assumption}{Assumption}
\title{\gls!{gp} Priors for Linear Systems of Differential Equations with Smooth Boundary Conditions}
\author{, }
\date{}
\begin{document}

\title{On boundary conditions parametrized by analytic functions}
%
%\titlerunning{Abbreviated paper title}
% If the paper title is too long for the running head, you can set
% an abbreviated paper title here
%
\author{Markus Lange-Hegermann\inst{1} \and
Daniel Robertz\inst{2}}
\authorrunning{Lange-Hegermann and Robertz}
% First names are abbreviated in the running head.
% If there are more than two authors, 'et al.' is used.
%
\institute{Technische Hochschule Ostwestfalen-Lippe, inIT (Institute Industrial  IT), Lemgo, Germany 
\email{markus.lange-hegermann@th-owl.de}\\
\and
Lehrstuhl f\"ur Algebra und Zahlentheorie, RWTH Aachen University, Germany\\
\email{daniel.robertz@rwth-aachen.de}}
\maketitle              % typeset the header of the contribution
\begin{abstract}
%The abstract should briefly summarize the contents of the paper in 15--250 words.

Computer algebra can answer various questions about partial differential equations using symbolic algorithms.
However, the inclusion of data into equations is rare in computer algebra.
Therefore, recently, computer algebra models have been combined with Gaussian processes, a regression model in machine learning, to describe the behavior of certain differential equations under data.
%Recently, Gaussian processes, a regression model in machine learning, has been combined with computer algebra methods for linear partial differential equations, to describe the behavior of certain differential equations under data.
While it was possible to describe polynomial boundary conditions in this context, we extend these models to analytic boundary conditions.
Additionally, we describe the necessary algorithms for Gr\"obner and Janet bases of Weyl algebras with certain analytic coefficients.
Using these algorithms, we provide examples of divergence-free flow in domains bounded by analytic functions and adapted to observations.

\keywords{Gaussian processes \and boundary conditions \and Gr\"obner bases \and partial differential equations.}
\end{abstract}

\section{Introduction}
\label{sec:intro}

Differential algebra is concerned with structural properties of systems of ordinary and partial differential equations (ODEs and PDEs) and
provides algorithms for their analysis \cite{Ob,bachler2012algorithmic}.
The properties unveiled by these algorithms correspond to intrinsic properties of the solutions of the system.
At the same time these algorithms isolate equations of interest via elimination, transform systems into normal forms \cite{gerdt2019maple}, describe singularities \cite{lange2021singularities},
allow to investigate control-theoretic properties \cite{lange2020thomas,lange2013thomas}, or detect the size of solution sets \cite{lange2018differential,lange2017differential,lange2014counting}.

Usually, PDEs come with additional information on the evaluation of functions.
For example in inverse problems, parameters in differential equations are being estimated from data points.
Or in theoretical and numerical methods for PDEs, boundary conditions, i.e.\ evaluations of functions on manifolds, ensure well-posedness.
Data points and boundary conditions have rarely been addressed by algebraic means, with the exception of modeling of boundary conditions by in\-te\-gro-diffe\-ren\-tial operators \cite{regensburger2009skew,rosenkranz2008solving}. %rosenkranz2012symbolic

Seemingly disconnected from these algebraic algorithms are \glspl!{gp} \cite{RW}, a general regression technique, which arise as limit of large neural networks \cite{neal1996priors} %,neal2012bayesian,lee2017deep,pleiss2021limitations}
and generalize linear (ridge) regression% \cite{RW}
, Kriging% \cite{matheron1963principles}
, and many spline models% \cite{RKHSProbabilityStatistics}
.
\glspl!{gp} describe probability distributions on function spaces.
As such,
\begin{enumerate}[label=(\arabic*)]
    \item\label{item:GPfeatures1} they can be conditioned on observations given as data points using Bayes' rule in closed form, which avoids overfitting,
    \item\label{item:GPfeatures2} they admit an extensive dictionary between their mathematical properties and their covariance function, which allows to prescribe intended behavior,
    \item\label{item:GPfeatures3} form the maximum entropy prior distribution under the assumption of a finite mean and variance in the unknown behavior, and
    \item\label{item:GPfeatures4} the class of \glspl!{gp} is closed under various operations like conditioning, marginalization, and linear operators.
\end{enumerate}
They are typically used in applications when data is rare or expensive to produce, e.g., in active learning \cite{zimmer2018safe}, biology \cite{LawrencePNAS2015}, anomaly detection \cite{BernsTowards2020} or engineering \cite{TLRB_gp}.
The mean function of the posterior is used for regression and the variance quantifies uncertainty.
%When choosing a suitable covariance function of the prior, the posterior can approximate any behavior present in data, even in noisy or unstructured data.
In that sense, they allow to deal with data, noise, and uncertainty in a way algebraic algorithms usually cannot.

The inclusion of algebraic methods for differential equations into covariance functions of \glspl!{gp} began by divergence-free and curl-free vector fields \cite{MacedoCastro2008,scheuerer2012covariance}, extended to electromagnetic fields \cite{Wahlstrom13modelingmagnetic,MagneticFieldGP} and strain fields \cite{jidling2018probabilistic}.
These approaches were formalized in \cite{LinearlyConstrainedGP}, building on \cite{sarkka2011linear}.
Then, \cite{LH_AlgorithmicLinearlyConstrainedGaussianProcesses} used Gr\"obner bases and worked out the necessity of systems being controllable.
Boundary conditions were added to the setup in \cite{LH_AlgorithmicLinearlyConstrainedGaussianProcessesBoundaryConditions}, restricted to simple polynomial boundaries.

In this paper, we develop algebraic algorithms suitable for this framework to deal with analytic boundary conditions.
These algorithms might take
\begin{enumerate}[label=(\roman*)]
    \item parametrizable linear systems of differential equations,
    \item assumptions on the solutions of the differential equations, e.g.\ smoothness,
    \item various forms of boundary conditions specified by analytic functions, and
    \item (noisy or noiseless) evaluations of functions at finitely many points
\end{enumerate}
as inputs.
They yield a probability distribution on the solution space of the differential equation given by a \gls!{gp}, which has the above properties \ref{item:GPfeatures1}--\ref{item:GPfeatures4}.

%We make few assumptions on the coefficients of the differential equation. %, as long as the system of differential equations is parametrizable, i.e.\ it leaves enough freedom to be controllable.
%The boundary conditions need to be given by a parametrization, which is no restriction in practice.

Our approach is as follows.
We construct a first parametrization of the solution set of the system of differential equations by finding a matrix whose row nullspace is generated by the equations of the given system.
We take a second parametrization of the boundary condition.
Then, we construct a parametrization of the intersection of the images of these two parametrizations.
Algorithmically, this requires Gr\"obner bases over a Weyl algebra enlarged by various analytic functions, for which we develop the necessary theory and algorithms.
After this symbolic approach, numeric algorithms incorporate measurement data into the \gls!{gp}.

In this setup, ODEs are trivial, both algebraically, as parametrizable linear systems of ODEs with constant or variable coefficients are isomorphic to free systems due to the Jacobson form \cite{TheoryOfRings}, and also from the stochastic point of view, as boundary conditions in ODEs can be modelled by conditioning on data points \cite{GOODE}.
Hence, we focus on PDEs.

From the point of view of machine learning, the results of this paper allow to incorporate information into the covariance structure of a \gls!{gp} prior.
This prior is supported by solutions of the differential equation and the boundary conditions.
In particular, rare measurement data can refine and improve this prior knowledge, instead of being necessary to learn this prior knowledge.

The contributions of this paper can be summarized as follows:
\begin{enumerate}[label=(\alph*)]
    \item we develop Gr\"obner basis algorithms for Weyl algebras over certain rings of analytic functions (cf.\ Sects.~\ref{section_Ore} and \ref{section_modules}),
    \item we study boundary conditions parametrized by analytic functions, in particular how they constrain \glspl!{gp} (cf.\ Sect.~\ref{section_boundary}), and
    \item we construct \gls!{gp} priors for solution sets of PDEs including boundary conditions (cf.\ Sect.~\ref{section_examples}).
\end{enumerate}

\section{Gaussian processes}\label{section_GP}

A \glsreset{gp}\emph{\gls!{gp}} $g=\GP(\mu,k)$ defines a probability distribution on the evaluations of functions $\domain\to\R^\ell$ where $\domain\subseteq\R^d\equiv\R^{1\times d}$ such that function values $g(x_1),\ldots,g(x_n)$
at points $x_1,\ldots,x_n\in\domain$ are jointly (multivariate) Gaussian.
A \gls!{gp} $g$ is specified by a \emph{mean function}
%\begin{align*}
    $\mu:\domain\to\R^\ell:x\mapsto E(g(x))$
%\end{align*}
and a positive semidefinite\footnote{The function $k$ is positive (semi)definite if and only if for any $x_1,\ldots,x_n\in\domain$ the matrix $K=(k(x_i,x_j))_{i,j}\in\R^{n\ell\times n\ell}$ is positive (semi)definite, i.e.\ $K \succeq 0$.} \emph{covariance function}
\begin{align*}
    k: \domain\times\domain \longrightarrow \R^{\ell\times\ell}_{\succeq0}: (x,x') \longmapsto E\left((g(x)-\mu(x))(g(x')-\mu(x'))^T\right)\mbox{.}
\end{align*}
Any finite set of evaluations of $g$ follows the multivariate Gaussian distribution
\begin{align*}
  \begin{bmatrix}g(x_1)\\ \vdots \\ g(x_n)\end{bmatrix}
  \sim
  \mathcal{N}\left(\begin{bmatrix}\mu(x_1)\\ \vdots \\ \mu(x_n)\end{bmatrix},\begin{bmatrix}
      k(x_1,x_1) & \ldots & k(x_1,x_n) \\
      \vdots & \ddots & \vdots \\
      k(x_n,x_1) & \ldots & k(x_n,x_n)
  \end{bmatrix}\right).
\end{align*}
Now, one knows where a function value $g(x)$ is supposed to be (mean $\mu(x)$), which ignorance we have about $g(x)$ (variance $k(x,x)$), and how two function values $g(x_1)$ and $g(x_2)$ are related (covariance $k(x_1,x_2)$).
%
%\glspl!{gp} are very simple stochastic processes.
%They are completely defined by the first two moments $\mu$ and $k$, all higher moments exist and are uniquely determined by $\mu$ and $k$, and all higher cumulants are zero.
\glspl!{gp} are popular functional priors in Bayesian inference due to their maximum entropy property \cite{jaynes2003probability}. % \cite{jaynes1968prior,jaynes2003probability}.

Assume the probabilistic regression model $y=g(x)$ for a \gls!{gp} $g=\GP(0,k)$.
Normalizing the data to mean zero justifies assuming a prior mean function zero. %\footnote{This also improves the numerical stability.}.
Conditioning the \gls!{gp} on training data points 
%\begin{align*}
    $(x_i,y_i)\in \domain\times\R^{1\times \ell}$
%\end{align*}
for $i=1,\ldots,n$ by Bayes' theorem yields the posterior
\begin{align*}
    p(\;g(x)=y\;|\;g(x_i)=y_i\;)=\frac{p(\;g(x_i)=y_i\;|\;g(x)=y\;)}{p(\;g(x_i)=y_i\;)}\cdot p(\;g(x)=y\;)\mbox{,}
\end{align*}
where $i$ always runs from $1$ to $n$.
All of these distributions are multivariate Gaussian.
Hence, the posterior $p(\;g(x)=y\;|\;g(x_i)=y_i\;)$ is again a \gls!{gp} and can be computed in closed form via linear algebra:
\begin{align}\label{eq_gp_posterior}
	\GP\Big( \quad \quad \quad x&\mapsto yk(X,X)^{-1}k(X,x), \\
	(x,x')&\mapsto k(x,x')-k(x,X)k(X,X)^{-1}k(X,x')\quad \Big)\nonumber\mbox{,}
\end{align}
where $y\in \R^{1\times \ell n}$ denotes the row vector obtained by concatenating the $y_i$ and $k(x,X)\in \R^{\ell\times\ell n}$ resp.\ $k(X,x)\in \R^{\ell n\times\ell}$ resp.\ $k(X,X)\in \R^{\ell n\times\ell n}_{\succeq0}$ denote the (covariance) matrices obtained by concatenating the blocks $k(x,x_j)$ resp.\ $k(x_j,x)$ resp.\ $k(x_i,x_j)$ to a matrix.
In case of a noisy data $(y_i)_j$, one adds the \emph{noise variance} $var((y_i)_j)$ to the $((i-1)\ell+j)$-th diagonal entry of $k(X,X)$.
%For a demonstration see Figure~\ref{figure_comparison_regression_derivative}.
%For numeric tricks using Cholesky decomposition we refer to \cite{RW}.
The Cholesky decomposition improves numerical stability regarding the inversion of the positive definite matrix $k(X,X)$ \cite{RW}.
In the posterior \eqref{eq_gp_posterior}, the mean function can be used as regression model and its variance as model uncertainty.
%This regression model is called non-parametric, as for most covariance functions no finite set of parameters is fixed by the training data, but instead entire functions.

The class of \glspl!{gp} is closed under linear operators once mild assumptions hold, e.g.\ the derivative of a \gls!{gp} with differentiable realizations is again a \gls!{gp}.

\begin{figure}
  \centering
  \begin{tikzpicture}[scale=0.4]
    \begin{axis}[ymin=-2, ymax=2,xmin=-3,xmax=3]
    % upper
    \addplot[name path=upper,domain=-2.5:2.5,blue] {-.990427971851666*exp(-(1/2)*(x+2)^2)+.990427971851666*exp(-(1/2)*(x-2)^2)+2-1.98019823825307*exp(-(x+2)^2)+0.131541090052936e-2*exp(-x^2-4)-1.98019823825307*exp(-(x-2)^2)};
    % lower
    \addplot[name path=lower,domain=-2.5:2.5,blue] {-.990427971851666*exp(-(1/2)*(x+2)^2)+.990427971851666*exp(-(1/2)*(x-2)^2)-2+1.98019823825307*exp(-(x+2)^2)-0.131541090052936e-2*exp(-x^2-4)+1.98019823825307*exp(-(x-2)^2)};
    %area
    \addplot [color=blue,fill=blue,fill opacity=0.05] fill between[of=lower and upper, soft clip={domain=-2.5:2.5} ];
    % mean
    \addplot[name path=mean,domain=-2.5:2.5,blue]{-.990427971851666*exp(-(1/2)*(x+2)^2)+.990427971851666*exp(-(1/2)*(x-2)^2)};
    % points
    \addplot[mark=*] coordinates {(-2,-1)};
    \addplot[mark=*] coordinates {(2,1)};
    \end{axis}
  \end{tikzpicture}
  \begin{tikzpicture}[scale=0.4]
    \begin{axis}[ymin=-2, ymax=2,xmin=-3,xmax=3]
    
    % upper
    \addplot[name path=upper,domain=-2.5:2.5,blue] {(0.907655997014386e-1-0.197316510985624e-1*x^2+0.526177478930923e-2*x)*exp(-x^2-4)+(-.995738707600120+.993745428529554*x)*exp(-(1/2)*(x-2)^2)+(1.00098648143689+.996367574180994*x)*exp(-(1/2)*(x+2)^2)+(-9.90122887197933+7.92100092363468*x-1.98024367721336*x^2)*exp(-(x-2)^2)+2.+(-9.90120090954602-7.92100441893886*x-1.98025066782170*x^2)*exp(-(x+2)^2)};
    % lower
    \addplot[name path=lower,domain=-2.5:2.5,blue] {(-0.907655997014386e-1+0.197316510985624e-1*x^2-0.526177478930923e-2*x)*exp(-x^2-4)+(-.995738707600120+.993745428529554*x)*exp(-(1/2)*(x-2)^2)+(1.00098648143689+.996367574180994*x)*exp(-(1/2)*(x+2)^2)+(9.90122887197933-7.92100092363468*x+1.98024367721336*x^2)*exp(-(x-2)^2)-2.+(9.90120090954602+7.92100441893886*x+1.98025066782170*x^2)*exp(-(x+2)^2)};
    %area
    \addplot [color=blue,fill=blue,fill opacity=0.05] fill between[of=lower and upper, soft clip={domain=-2.5:2.5} ];
    % mean
    \addplot[name path=mean,domain=-2.5:2.5,blue]{-.991748666925102*exp(-(1/2)*(x+2)^2)+.991752149458988*exp(-(1/2)*(x-2)^2)+(.996367574180994*(x+2))*exp(-(1/2)*(x+2)^2)+(.993745428529554*(x-2))*exp(-(1/2)*(x-2)^2)};
    %points
    \addplot[mark=*] coordinates {(-2,-1)};
    \addplot[name path=d1,domain=-2.1:-1.9,black,line width=2]{x+1};
    \addplot[mark=*] coordinates {(2,1)};
    \addplot[name path=d2,domain=1.9:2.1,black,line width=2]{x-1};
    \end{axis}
  \end{tikzpicture}

% restart;with(plots);
% k:=exp(-1/2*(x-y)^2);
% x1:=-2:x2:=2:y1:=-1:y2:=1:dy1:=1:dy2:=1:
% K1:=evalf(<<subs([x=x1,y=x1],k),subs([x=x2,y=x1],k)>|<subs([x=x1,y=x2],k),subs([x=x2,y=x2],k)>>);
% K1:=K1+0.1^2;
% Y1:=<y1,y2>;
% mu1:=<subs(y=x1,k)|subs(y=x2,k)>.K1^(-1).Y1;
% var1:=subs(y=x,k)-<subs(y=x1,k)|subs(y=x2,k)>.K1^(-1).<subs(y=x1,k),subs(y=x2,k)>;
% simplify(mu1+2*var1);
% simplify(mu1-2*var1);
% display(
%   plot(mu1,x=x1-1..x2+1),
%   plot(mu1+2*var1,x=x1-1..x2+1),
%   plot(mu1-2*var1,x=x1-1..x2+1),
%   pointplot([[x1,y1],[x2,y2]])
% );
% K2:=evalf(<
%   <subs([x=x1,y=x1],k)        |subs([x=x1,y=x2],k)        |subs([x=x1,y=x1],diff(k,x))  |subs([x=x1,y=x2],diff(k,x))>,
%   <subs([x=x2,y=x1],k)        |subs([x=x2,y=x2],k)        |subs([x=x2,y=x1],diff(k,x))  |subs([x=x2,y=x2],diff(k,x))>,
%   <subs([x=x1,y=x1],diff(k,y))|subs([x=x1,y=x2],diff(k,y))|subs([x=x1,y=x1],diff(k,x,y))|subs([x=x1,y=x2],diff(k,x,y))>,
%   <subs([x=x1,y=x2],diff(k,y))|subs([x=x2,y=x2],diff(k,y))|subs([x=x2,y=x1],diff(k,x,y))|subs([x=x2,y=x2],diff(k,x,y))>
% >);
% K2:=K2+0.1^2;
% Y2:=<y1,y2,dy1,dy2>;
% mu2:=<subs(y=x1,k)|subs(y=x2,k)|subs(y=x1,diff(k,y))|subs(y=x2,diff(k,y))>.K2^(-1).Y2;
% var2:=subs(y=x,k)-<subs(y=x1,k)|subs(y=x2,k)|subs(y=x1,diff(k,x))|subs(y=x2,diff(k,x))>.K2^(-1).<subs(y=x1,k),subs(y=x2,k),subs(y=x1,diff(k,x)),subs(y=x2,diff(k,x))>;
% simplify(mu2+2*var2);
% simplify(mu2-2*var2);
% display(
%   plot(mu2,x=x1-1..x2+1),
%   plot(mu2+2*var2,x=x1-1..x2+1),
%   plot(mu2-2*var2,x=x1-1..x2+1),
%   pointplot([[x1,y1],[x2,y2]])
% );

  \caption{Left: a regression plot (mean and the $2\sigma$ confidence bands) of a \gls!{gp} with mean zero and squared exponential covariance function conditioned on the points $(-2,-1)$ and $(2,1)$ with noise variance of $0.1^2$.
  Right: the \gls!{gp} is additionally conditioned on derivative $1$ with noise $0.1^2$ at both data points.}
  \label{figure_comparison_regression_derivative}
\end{figure}
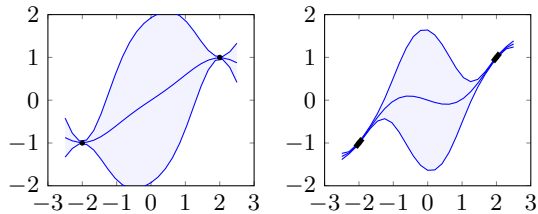

\begin{wrapfigure}{r}{2.0cm}
	\vspace{-1.5em}
	\begin{tikzcd}
	D\arrow[rd, bend left, "b\circ g\in b_*G"]\arrow[d, "g\in G"] \\
	Y\arrow[r,"b"] & Z
	\end{tikzcd}
	\!\!\!\!\!\!
\end{wrapfigure}
Given a set of functions $G\subseteq Y^D$ and $b:Y\to Z$, then the \emph{pushforward} is
% \begin{align*}
$
b_*G=\{b\circ f\mid f\in G\}\subseteq Z^D
$.
% \end{align*}
The \emph{pushforward} of a stochastic process $g:\domain\to Y$ by $b:Y\to Z$ is defined as
\begin{align*}
b_*g:\domain\longrightarrow Z: d\longmapsto(b\circ g)(d).
\end{align*}

\begin{lemma}[{\cite[Lemma~2.2]{LH_AlgorithmicLinearlyConstrainedGaussianProcessesBoundaryConditions}}]\label{lemma_pushforward_gaussian}
	Let $\F$ and $\G$ be spaces of functions defined on a set $\domain$ with product $\sigma$-algebra of the function evaluations.
	Let $g=\GP(\mu(x),k(x,x'))$ with realizations in $\F$ and $B:\F\to\G$ a linear, measurable operator which commutes with expectation w.r.t.\ the measure induced by $g$ on $\F$ and by $B_*g$ on $\G$.
	Then, the \glsreset{gp}\emph{pushforward \gls!{gp}} $B_*g$ of $g$ under $B$ is a \gls!{gp} with
	\begin{align*}%\label{eq_tansformation}
	B_*g = \GP(B\mu(x),Bk(x,x')(B')^T)\mbox{ ,}
	\end{align*}
	where $B'$ denotes the operation of $B$ on functions with argument $x'$.
\end{lemma}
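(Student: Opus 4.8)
The plan is to verify the two defining ingredients of a \gls!{gp}—that every finite collection of evaluations of $B_*g$ is jointly Gaussian, and that the mean and covariance are the asserted ones—by exploiting that $B$ is linear and commutes with expectation. First I would show that $B_*g$ is a \gls!{gp} at all. Fix points $x_1,\ldots,x_n\in\domain$ and row vectors $a_1,\ldots,a_n$ of matching size. For a realization $f$ of $g$ the scalar $\sum_i a_i (B_*g)(x_i)=\sum_i a_i (Bf)(x_i)$ is the image of $f$ under the map $f\mapsto\sum_i a_i (Bf)(x_i)$, which is a measurable linear functional on $\F$, being a finite linear combination of evaluation functionals precomposed with the linear, measurable operator $B$. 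Since the law of $g$ on $\F$ is Gaussian (all its finite-dimensional evaluation marginals are Gaussian by hypothesis), every measurable linear functional of it is a real Gaussian variable; hence $\sum_i a_i (B_*g)(x_i)$ is Gaussian for all choices of $a_i$, so by the Cram\'er--Wold criterion $\bigl((B_*g)(x_1),\ldots,(B_*g)(x_n)\bigr)$ is jointly Gaussian. As $n$ and the $x_i$ were arbitrary, $B_*g$ is a \gls!{gp}.

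Next I would compute the mean. By definition $E[(B_*g)(x)]=E[(Bg)(x)]$, and since $B$ commutes with expectation this equals $(B\,E[g])(x)=(B\mu)(x)$, the claimed mean. For the covariance I would use linearity of $B$ to write $(B_*g)(x)-B\mu(x)=\bigl(B(g-\mu)\bigr)(x)$, so that the covariance of $B_*g$ at $(x,x')$ equals $E\bigl[\bigl(B(g-\mu)\bigr)(x)\,\bigl(B'(g-\mu)\bigr)(x')^T\bigr]$, where $B'$ denotes $B$ acting on the $x'$-argument. Because $B$ affects only the $x$-dependence and $B'$ only the $x'$-dependence, and both commute with expectation, I can pull them outside the expectation to obtain $B\,E\bigl[(g-\mu)(x)\,(g-\mu)(x')^T\bigr]\,(B')^T=B\,k(x,x')\,(B')^T$, as asserted.

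The step I expect to be the main obstacle is the very first one: justifying that the pushforward of the Gaussian law of $g$ under $B$ is again Gaussian on every finite-dimensional marginal. When $B$ is a genuine operator (e.g.\ a differential operator) rather than a pointwise map, $(Bf)(x_i)$ is not a finite linear combination of the evaluations $f(x_1),\ldots,f(x_n)$, so one cannot reduce directly to the finite-dimensional Gaussianity of $g$; instead one needs the measure-theoretic fact that measurable linear functionals of a Gaussian measure are themselves Gaussian. This is precisely what the hypotheses—that $B$ be linear, measurable, and compatible with the expectations induced by $g$ on $\F$ and by $B_*g$ on $\G$—are designed to guarantee, and it is the delicate point that separates this statement from the elementary finite-dimensional fact that linear images of Gaussian vectors are Gaussian.
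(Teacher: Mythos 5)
This paper does not actually prove Lemma~\ref{lemma_pushforward_gaussian}: it is imported verbatim, with a citation, from Lemma~2.2 of the referenced work, so there is no in-paper proof to compare your argument against. Your argument is the standard and essentially correct one: Cram\'er--Wold reduces joint Gaussianity of the evaluations of $B_*g$ to Gaussianity of the scalar $f\mapsto\sum_i a_i(Bf)(x_i)$, which is a measurable linear functional on $\F$ (evaluations on $\G$ are measurable by the product-$\sigma$-algebra hypothesis), and the mean and covariance formulas then follow from linearity and the assumed commutation of $B$ with expectation. The one step you should not leave implicit is the non-elementary fact you lean on --- that a measurable linear functional of a Gaussian measure is itself Gaussian (it is an a.e.\ limit of finite linear combinations of evaluations); this requires a reference such as Bogachev's \emph{Gaussian Measures} (\S 2.10), and you rightly identify it as the point that separates the lemma from the finite-dimensional statement that linear images of Gaussian vectors are Gaussian.
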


\begin{example}\label{example_derivative_GP}
	Let $g=\GP(0,k(x,x'))$ be a \gls!{gp} with realizations (a.s.) in the set $C^1(\R,\R)$ of differentiable functions.
	The pushforward \gls!{gp}
	\begin{align*} %\label{eq_differentiable}
		\begin{bmatrix}\frac{\partial}{\partial x}\end{bmatrix}_*g := \GP\left(0,\frac{\partial^2}{\partial x\partial x'}k(x,x')\right)\mbox{}
	\end{align*}
	describes derivatives of the \gls!{gp} $g$ \cite[\textsection5.2]{StationaryAndRelatedStochasticProcesses}.
	The one-argument derivative $\frac{\partial}{\partial x}k(x,x')$ yields the cross-covariance between on the one hand a function evaluation $g(x')$ of $g$ at $x'\in\R$ and on the other hand its derivative $(\begin{bmatrix}\frac{\partial}{\partial x}\end{bmatrix}_*g)(x)$ evaluated at $x\in\R$.
	We use this to include data of derivatives into a model in Figure~\ref{figure_comparison_regression_derivative}.
\end{example}

\section{Solution sets of operator equations}\label{section_GPs_solutions}

This section discusses how \glspl!{gp} describe the real vector space $\F=C^\infty(\domain,\R)$, a candidate set of solutions for the linear differential equations, and how such \glspl!{gp} interplay with linear operators.
Assume that $\domain\subset\R^d$ is compact and $\F$ is endowed with the usual Fr\'echet topology generated by the separating family
\begin{align}\label{eq_frechet}
	\|f\|_{a}:=
	\sup_{\substack{i\in\Z_{\ge0}^d\\ |i|\le a}}
	\sup_{x\in D}\ \left|\frac{\partial^{|i|}}{\partial x^i}f(x)\right|
\end{align}
of seminorms for all $a\in\Z_{\ge0}$%
%\cite[\S10]{TopologicalVectorSpaces}
, where $i=(i_1,\ldots,i_d)\in\Z_{\ge0}^d$ is a multi-index with $|i|=i_1+\ldots+i_d$.
The \emph{squared exponential covariance function}
\begin{align}\label{eq_SE}
k_\F:\R^d\times\R^d\longrightarrow\R:(x_i,x_j)\longmapsto\exp\left(-\frac{1}{2}\sum_{a=1}^d(x_{i,a}-x_{j,a})^2\right)
\end{align}
induces an adapted \gls!{gp} prior in $\F=C^\infty(\domain,\R)$.

\begin{proposition}
    The scalar \gls!{gp} $g_\F=\GP(0,k_\F)$ has realizations dense (a.s.) in $\F$ with respect to the Fr\'echet topology defined by Equation \eqref{eq_frechet}.
\end{proposition}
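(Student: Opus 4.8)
The plan is to recast the statement as a claim about the topological support of the centered Gaussian measure that $g_\F$ induces on $\F$: ``realizations dense (a.s.)'' means precisely that this support is all of $\F$, so that the process lands in every nonempty open subset of $\F$ with positive probability. I would invoke the support theorem for Gaussian measures on a separable locally convex space, which identifies the support with the closure of the Cameron--Martin space, i.e.\ the reproducing kernel Hilbert space (RKHS) $\H\subseteq\F$ of $k_\F$. Since $\F=C^\infty(\domain,\R)$ with the Fr\'echet topology of \eqref{eq_frechet} is a separable nuclear Fr\'echet space, and since the squared exponential kernel has almost surely $C^\infty$ (in fact real-analytic) realizations so that the measure genuinely lives on $\F$, the whole statement reduces to the functional-analytic claim that $\H$ is dense in $\F$.

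To prove this density I would argue by duality (Hahn--Banach): $\H$ is dense in the locally convex space $\F$ if and only if every continuous linear functional $T\in\F'$ that vanishes on $\H$ is zero. The dual of $C^\infty(\domain,\R)$ consists of distributions of finite order supported in the compact set $\domain$; in particular each such $T$ has compact support, so the convolution $T*G$ with $G(z)=\exp(-\tfrac12\|z\|^2)$ is well defined and smooth. The key point is that the kernel sections $k_\F(\cdot,x_0)=G(\cdot-x_0)$ lie in $\H$ for every $x_0\in\domain$, so $T|_\H=0$ forces $\langle T,G(\cdot-x_0)\rangle=(T*G)(x_0)=0$ for all $x_0\in\domain$.

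From here I would finish in two steps. First, because $G$ is entire and $T$ has compact support, $T*G$ extends to an entire function on $\mathbb{C}^d$; its vanishing on $\domain$, which has nonempty interior, forces $T*G\equiv 0$ on the connected set $\R^d$ by the identity theorem for real-analytic functions. Second, passing to Fourier transforms gives $\widehat{T}\cdot\widehat{G}=0$, and since $\widehat{G}$ is again a Gaussian and hence nowhere vanishing, we obtain $\widehat{T}=0$ and therefore $T=0$. This yields density of $\H$ in $\F$, and the support theorem then gives the proposition.

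I expect the main obstacle to be the density step together with its functional-analytic bookkeeping: correctly identifying $\F'$ as the compactly supported distributions on $\domain$, verifying that $\H$ embeds continuously into $\F$ so that the support theorem applies, and invoking real-analyticity to upgrade vanishing on $\domain$ to vanishing on all of $\R^d$ (where the nonempty interior of $\domain$ is used). By contrast, the nowhere-vanishing of the Gaussian's Fourier transform---the analytic heart that makes $k_\F$ ``universal'' in this strong $C^\infty$ sense---is immediate, and the reduction to density via the Gaussian support theorem is standard.
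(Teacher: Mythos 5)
Your argument has the same three-step skeleton as the paper's proof: (i) the sample paths lie in $\F$, (ii) the topological support of the induced Gaussian measure is the closure of the RKHS $\H(g_\F)$ in $\F$ (Bogachev's support theorem for Radon Gaussian measures on locally convex spaces), and (iii) $\H(g_\F)$ is dense in $\F$. The difference is confined to step (iii): the paper disposes of it by citing the universality results of \cite{SimSch16}, whereas you give a self-contained Hahn--Banach/Fourier argument --- a functional $T\in\F'$ vanishing on the kernel sections yields an entire function $T*G$ vanishing on $\domain$, hence $T*G\equiv 0$, hence $\widehat{T}\,\widehat{G}=0$ and $T=0$ because the Gaussian $\widehat{G}$ never vanishes. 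This is essentially the classical proof that the squared exponential kernel is $C^\infty$-universal, so you are unpacking the paper's citation rather than replacing it; what your route buys is transparency about exactly where the specific kernel enters (the nowhere-vanishing Fourier transform), at the cost of some functional-analytic bookkeeping that the citation hides. Two caveats. First, your identity-theorem step requires $\domain$ to have nonempty interior, an assumption the proposition does not make; for general compact $\domain$ you would instead observe that $T$, being continuous on $\F$, also annihilates the derivatives $\partial_{x_0}^\alpha k_\F(\cdot,x_0)$ as limits of difference quotients of kernel sections, so that $T*G$ vanishes to finite order $\ge \mathrm{ord}(T)$ on $\operatorname{supp}T\subseteq\domain$ and $0=\langle T,T*G\rangle=\int|\widehat{T}|^2\,\widehat{G}$ follows from the structure theorem for distributions of finite order with given support. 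Second, you assert rather than prove that the realizations are a.s.\ smooth; the paper establishes this by the Cram\'er--Leadbetter differentiability criterion applied inductively to the covariances of successive derivatives, and some such argument is needed before the support theorem can be applied to a measure living on $\F$.
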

\begin{proof}
	We show that the realizations of $g_\F$ are densely contained in $\F$ in three steps: first, the realizations are contained in $\F$, i.e.\ smooth; second, the elements of the reproducing kernel Hilbert space (RKHS)\footnote{%
		For $\GP(0,k)$, the set $\H^0(g)$ generated as a vector space by the $x\mapsto k(x_i,x)$ for $x_i\in D$ with scalar product
		$\left\langle k(x_i,-),k(x_j,-) \right\rangle := k(x_i,x_j)$
		is a pre-Hilbert space.
		%It is the set of all mean functions of $g$ after conditioning on finitely many observations.
		Its closure $\H(g)$ is the \emph{reproducing kernel Hilbert space} of the \gls!{gp} $g$ \cite{RKHSProbabilityStatistics}.
    } $\H(g_\F)$ of the \gls!{gp} $g_\F$, are realizations; and third, the RKHS $\H(g_\F)$ is dense in $\F$.
    
    First, show that the realizations of $g_\F$ lie in $\F$ . % by induction.
    They are continuously differentiable, as $k_\F$ is twice continuously differentiable \cite[(9.2.2)]{StationaryAndRelatedStochasticProcesses}.
    Continue inductively, as the covariance $\frac{\partial^2}{\partial x\partial x'}k_\F(x,x')$ of the derivative of $g_\F$ is again smooth. % (cf. Example~\ref{example_derivative_GP}).
    
%	For the first step, we use an induction\todo[color=green]{Wie funktioniert nochmal der Induktionsschritt?} and the smoothness of $k_\F(x,x)$ to show that any realization of $g_\F$ is a.s.\ smooth: by \cite[(9.2.2)]{StationaryAndRelatedStochasticProcesses} any realization of $g$ is (a.s.) continuously differentiable as $k(x,x)$ is twice continuously differentiable and the covariance function of the derivative process is then given the negative second derivative by \cite[using][\textsection5.2]{StationaryAndRelatedStochasticProcesses}, cf.\ also Example~\ref{example_derivative_GP}.
	
	For the second step, we note that $C^\infty(\domain,\R)$ is Radon as $\domain$ is compact, hence $g_\F$ induces a Radon measure on $\F$.
	For any Radon measure, $\H(g_\F)$ is contained in the topological support of the measure induced by $g_\F$ by \cite[Thm.~3.6.1]{bogachev1998gaussian}.
	For this, $\F=C^\infty(\domain,\R)$ needs to be locally convex, which it is being Fr\'echet.
	
  	For the third step, by \cite[Prop.~4]{SimSch16}, $\H(g_\F)$ is continuously contained
  	%\footnote{This is stronger than mean square differentiability, cf.\ eg.\ \cite[\S5.2]{StationaryAndRelatedStochasticProcesses}.}
  	in $\F$ and dense by \cite[Thm.~12, Prop.~42]{SimSch16} or \cite[after proof of Cor.~38]{SimSch16}.\qed
\end{proof}

The following three $\R$-algebras $R$ model linear operator equations by making $\F$ a left $R$-module.
Sects.~\ref{section_Ore} and \ref{section_modules} introduce Gr\"obner bases for such rings.
%We say that $\F$ is a \emph{topological $R$-(left-)module}.

\begin{example}\label{example_linear_dgl}
	The polynomial ring $R=\R[\partial_{x_1},\ldots,\partial_{x_d}]$ models linear PDEs with constant coefficients, where $\partial_{x_i}$ acts on $\F=C^\infty(D,\R)$ via partial derivative with respect to $x_i$.%, making $\F$ into an $R$-module.
\end{example}

\begin{example}\label{example_polynomial_operator_ring}
    Let $f_1,\ldots,f_n\in \F$ be functions.
	The ring $R=\R[f_1,\ldots,f_n]$ is commutative and models boundary conditions by multiplication, see Sect.~\ref{section_boundary}.
\end{example}

\begin{example}\label{example_Weyl}
    Let $F\subseteq \F$ be an $\R$-algebra closed under partial derivatives.
	To combine linear differential equations with boundary conditions, consider the Weyl algebra $R=\R[F]\langle \partial_{x_1},\ldots,\partial_{x_d}\rangle$.
	The non-commutative relation $\partial_{x_i}f=f\partial_{x_i}+\frac{\partial f}{\partial x_i}$ represents the product rule of differentiation for $f\in F$ and $1\le i\le d$.
\end{example}

Operators defined over these three rings satisfy the assumptions of Lemma~\ref{lemma_pushforward_gaussian}:
multiplication commutes with expectations and the dominated convergence theorem implies that expectation commutes with derivatives, as realizations of $g_\F$ are continuously differentiable.
Furthermore, these rings act continuously on $\F$:
the Fr\'echet topology makes derivation continuous by construction, and multiplication by elements in $\F$ is bounded as $D$ is compact, which implies continuity in the Fr\'echet space $\F$.
In particular, we have the following:

\begin{corollary}\label{coro_pushforward}
	Let $\F=C^\infty(D,\R)$ be the space of smooth functions defined on a compact set $\domain\subset\R^d$.
	Let $g=\GP(\mu(x),k(x,x'))$ with realizations in $\F^{\ell''}$ and $B:\F^{\ell''}\to\F^\ell$ a linear operator over one of the operator rings in Examples~\ref{example_linear_dgl}, \ref{example_polynomial_operator_ring}, or \ref{example_Weyl}.
	Then, the \emph{pushforward \gls!{gp}} $B_*g$ is again Gaussian with
	\begin{align*}
	    B_*g = \GP(B\mu(x),Bk(x,x')(B')^T)\mbox{ ,}
	\end{align*}
	where $B'$ denotes the operation of $B$ on functions with argument $x'$.
\end{corollary}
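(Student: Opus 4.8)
The plan is to derive the corollary as a direct application of Lemma~\ref{lemma_pushforward_gaussian}, so that the entire task reduces to checking that every operator $B$ assembled from one of the three rings in Examples~\ref{example_linear_dgl}, \ref{example_polynomial_operator_ring}, and \ref{example_Weyl} satisfies the hypotheses of that lemma: $B$ is linear and measurable, maps realizations of $g$ back into $\F^\ell$, and commutes with expectation. I would obtain measurability through continuity on $\F$, since continuous maps between these Fr\'echet spaces are Borel measurable. First I would reduce the vector-valued case $B:\F^{\ell''}\to\F^\ell$ to the scalar situation: $B$ is represented by an $\ell\times\ell''$ matrix whose entries $B_{ij}$ are single ring elements, and since a finite $\R$-linear combination of jointly Gaussian vectors is again Gaussian, it suffices to verify the hypotheses for each entry $B_{ij}$, hence for each generator of the respective ring.

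Next I would treat the two kinds of generators separately. For multiplication by a fixed $f\in F\subseteq\F$ (Examples~\ref{example_polynomial_operator_ring} and \ref{example_Weyl}), linearity is immediate, the operation commutes with expectation because $f$ is deterministic and expectation is $\R$-linear, and continuity on $\F$ follows from $f$ being bounded on the compact domain $\domain$, so that the corresponding multiplication operator is bounded in each seminorm $\|\cdot\|_a$ of Equation~\eqref{eq_frechet}. For the partial derivative $\partial_{x_i}$ (Examples~\ref{example_linear_dgl} and \ref{example_Weyl}), the preceding proposition guarantees that the realizations of $g_\F$ are almost surely smooth, so $\partial_{x_i}$ maps realizations back into $\F$; continuity is built into the definition of the Fr\'echet topology; and commutation with expectation, $E(\partial_{x_i} g) = \partial_{x_i} E(g)$, follows from the dominated convergence theorem applied to the difference quotients, licensed by the continuous differentiability of the realizations.

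Finally, since an arbitrary element of any of the three rings is an $\R$-linear combination of compositions of these generators, and since linearity, continuity (hence measurability), and commutation with expectation are all preserved under $\R$-linear combination and composition of operators, every admissible $B$ inherits them. Applying Lemma~\ref{lemma_pushforward_gaussian} then yields that $B_*g$ is Gaussian with mean $B\mu(x)$ and covariance $Bk(x,x')(B')^T$, where $B'$ acts in the $x'$ argument. The step I expect to be the main obstacle is the commutation of the derivative with expectation in the required measure-theoretic sense: one must make the interchange rigorous against the Radon measure induced by $g_\F$ on the locally convex space $\F$ (as set up in the preceding proposition) and confirm that the difference quotients are dominated uniformly enough for dominated convergence to apply. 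The multiplication and linearity checks are routine by comparison.
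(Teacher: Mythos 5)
Your proposal is correct and follows essentially the same route as the paper: the paper justifies the corollary in the paragraph immediately preceding it by reducing to Lemma~\ref{lemma_pushforward_gaussian} and checking exactly the points you list --- multiplication commutes with expectation, derivatives commute with expectation via dominated convergence because realizations are continuously differentiable, derivation is continuous by construction of the Fr\'echet topology, and multiplication is bounded (hence continuous) since $\domain$ is compact. Your additional reduction to matrix entries and ring generators is a more explicit spelling-out of the same argument rather than a different approach.
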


\section{Parametrizations}\label{section_paramerization}

We consider solution sets of linear differential equations, how to parametrize them by a suitable matrix $B$ and thereby describe them by a \gls!{gp} $B_*g$.
Let $\ringdiffop$ be one of the rings from the previous section, $\F$ the left $\ringdiffop$-module $C^{\infty}(\domain, \R)$ and
$A\in R^{\ell'\times\ell}$.
Define the \emph{solution set} $\sol_\F(A):=\{f\in \F^{\ell\times1}\mid Af=0\}$ of $A$.
We say that a \gls!{gp} is \emph{in} a function space, if its realizations are a.s.\ contained in said space.
We first describe the interplay of \glspl!{gp} and solution sets of operators.

\begin{lemma}[{\cite[Lemma~2.2]{LH_AlgorithmicLinearlyConstrainedGaussianProcesses}}]\label{lemma_gp_operator}
	Let $g=\GP(\mu,k)$ be a \gls!{gp} in $\F^{\ell\times1}$.
	Then $g$ is a \gls!{gp} in the solution set $\sol_\F(A)$ of $A\in R^{\ell'\times\ell}$ if and only if both $\mu$ is contained in $\sol_\F(A)$ and $A_*(g-\mu)$ is the constant zero process.
\end{lemma}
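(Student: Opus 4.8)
The plan is to establish the claimed equivalence by exploiting the linearity of $A$ together with the pushforward description of $A_*g$ supplied by Corollary~\ref{coro_pushforward}. I would begin by recording the decomposition $g = \mu + (g-\mu)$, so that $g-\mu = \GP(0,k)$ is a centered \gls!{gp} in $\F^{\ell\times1}$. Since $A$ is a linear operator over one of the rings in Examples~\ref{example_linear_dgl}, \ref{example_polynomial_operator_ring}, or \ref{example_Weyl}, Corollary~\ref{coro_pushforward} applies and yields that $A_*g$ is again a \gls!{gp}, with mean function $A\mu$ and covariance $Ak(x,x')(A')^T$. By linearity of the pushforward on the mean, $A_*g = A\mu + A_*(g-\mu)$ in the sense that the mean function of $A_*g$ is $A\mu$ and its centered part is $A_*(g-\mu)$.

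First I would prove the forward direction. Suppose $g$ is a \gls!{gp} in $\sol_\F(A)$, i.e.\ almost every realization $f$ satisfies $Af=0$. The key observation is that a \gls!{gp} whose realizations are almost surely the zero function is exactly the constant zero process, because a centered Gaussian random variable with all realizations equal to $0$ has variance $0$; hence $A_*g$ is the constant zero process, which forces both its mean $A\mu = 0$ (so $\mu\in\sol_\F(A)$) and its covariance to vanish. Since the centered part of $A_*g$ is $A_*(g-\mu)$, a zero mean together with the fact that $A_*g$ equals the zero process gives that $A_*(g-\mu)$ is the constant zero process as well.

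For the converse, assume $\mu\in\sol_\F(A)$ and that $A_*(g-\mu)$ is the constant zero process. Then $A_*g$ has mean $A\mu=0$ and centered part $A_*(g-\mu)=0$, so $A_*g$ is the constant zero process; in particular its covariance $Ak(x,x')(A')^T$ vanishes identically and every realization of $A_*g$ is almost surely the zero function. Writing $g = \mu + (g-\mu)$, for almost every realization $f$ we have $Af = A\mu + A(f-\mu) = 0 + 0 = 0$, so $g$ is a \gls!{gp} in $\sol_\F(A)$, as required.

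The step I expect to require the most care is the translation between the pathwise statement ``$Af=0$ almost surely'' and the distributional statement ``$A_*g$ is the constant zero process.'' This hinges on the measurability and expectation-commuting hypotheses underlying Corollary~\ref{coro_pushforward}, which guarantee that $A_*g$ is genuinely a \gls!{gp} rather than merely a collection of random variables; given that, the equivalence of a centered \gls!{gp} being identically zero, having zero covariance, and having almost surely vanishing realizations is the crux, and I would justify it by noting that for a jointly Gaussian family the law is determined by the mean and covariance, so a vanishing mean and covariance characterize the degenerate zero process.
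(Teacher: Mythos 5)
The paper does not prove this lemma itself --- it imports it by citation as \cite[Lemma~2.2]{LH_AlgorithmicLinearlyConstrainedGaussianProcesses} --- so there is no in-paper proof to compare against; your argument is correct and is essentially the standard proof of the cited result: decompose $g=\mu+(g-\mu)$, push forward by $A$ via Corollary~\ref{coro_pushforward}, and use that a Gaussian process is the constant zero process if and only if its mean and covariance vanish. The one step deserving an extra line is the passage from ``$(A_*g)(x)=0$ a.s.\ for each fixed $x$'' to ``almost surely $Af$ is the zero function,'' which requires continuity of the realizations of $A_*g$ together with separability of $\domain$ (vanishing on a countable dense subset implies vanishing everywhere); you correctly flag this as the crux but leave it unspelled.
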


This lemma motivates how to construct \glspl!{gp} with realizations in $\sol_\F(A)$: find a $B\in R^{\ell\times\ell''}$ with $AB=0$ \cite{LinearlyConstrainedGP}.
Then, taking any \gls!{gp} $g=\GP(0,k)$ in $\F^{\ell''\times 1}$, the realizations of $B_*g$ are (possibly strictly) contained in $\sol_\F(A)$, as $A_*(B_*g)=(AB)_*g=0_*g=0$.
One prefers to enlarge $B$ to approximate \emph{all} solutions in $\sol_\F(A)$ by $B_*g$, i.e., the realizations of $B_*g$ should be dense in $\sol_\F(A)$.
Call $B\in R^{\ell\times\ell''}$ a \emph{parametrization} of $\sol_\F(A)$ if $\sol_\F(A)=B\F^{\ell''\times 1}$.
Such a parametrization does not always exist, e.g., for the matrix $A=\begin{bmatrix}\partial_{x_1}\end{bmatrix}$.

\begin{proposition}[{\cite[Proposition~3.5]{LH_AlgorithmicLinearlyConstrainedGaussianProcessesBoundaryConditions}}]\label{proposition_denseparametrization}
	Let $B\in R^{\ell\times\ell''}$ be a parametrization of $\sol_\F(A)$.
	Let $g_\F^{\ell''\times 1}$ be the \gls!{gp} of $\ell''$ i.i.d.\ copies of $g_\F$, the \gls!{gp} with squared exponential covariance $k_\F$ \eqref{eq_SE}.
	Then, $B_*g_\F^{\ell''\times 1}$ has realizations dense in $\sol_\F(A)$.
\end{proposition}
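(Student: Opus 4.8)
The plan is to observe that, by the definition of a pushforward process, the realizations of $B_* g_\F^{\ell'' \times 1}$ are exactly the images under $B$ of the realizations of $g_\F^{\ell'' \times 1}$; the set of sample paths of the pushforward equals $B$ applied to the set of sample paths of the underlying process. Since $B$ is a parametrization, its image is precisely $\sol_\F(A) = B\F^{\ell'' \times 1}$. Hence the whole assertion reduces to showing that a continuous linear operator carries an (a.s.) dense set of realizations onto a set that is dense in the operator's image, and the argument rests on just two ingredients: density of the realizations of $g_\F^{\ell'' \times 1}$ in $\F^{\ell'' \times 1}$, and continuity of $B$.

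First I would transfer the scalar density statement to the vector-valued process. The preceding Proposition gives that the realizations of $g_\F$ are a.s.\ dense in $\F$ for the Fr\'echet topology \eqref{eq_frechet}. Because $g_\F^{\ell'' \times 1}$ consists of $\ell''$ independent copies and the Fr\'echet topology on $\F^{\ell'' \times 1}$ is the product topology, intersecting the $\ell''$ almost-sure events shows that a.s.\ each component's realizations are simultaneously dense; since a finite product of dense subsets is dense in the product topology, the realizations of $g_\F^{\ell'' \times 1}$ are a.s.\ dense in $\F^{\ell'' \times 1}$.

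Second, I would recall that $B$ acts continuously. As remarked before Corollary~\ref{coro_pushforward}, each operator ring acts continuously on $\F$ -- derivations are continuous by construction of the seminorms in \eqref{eq_frechet}, and multiplication by an element of $\F$ is bounded since $\domain$ is compact -- so $B \in R^{\ell \times \ell''}$ induces a continuous $\R$-linear map $B \colon \F^{\ell'' \times 1} \to \F^{\ell \times 1}$. An elementary topological fact then closes the argument: if $D$ is dense in a space $X$ and $B$ is continuous, then $B(D)$ is dense in $B(X)$, because for $B(x) \in B(X)$ and any neighborhood $V$ of $B(x)$ the preimage $B^{-1}(V)$ is an open neighborhood of $x$, hence meets $D$, so $V$ meets $B(D)$. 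Applying this with $X = \F^{\ell'' \times 1}$, $D$ the a.s.\ dense set of realizations, and $B(X) = B\F^{\ell'' \times 1} = \sol_\F(A)$ yields that the realizations of $B_* g_\F^{\ell'' \times 1}$ are a.s.\ dense in $\sol_\F(A)$.

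The routine verifications above are harmless; the genuine content sits in the cited scalar density Proposition, which we may assume. The one point deserving care is the a.s.\ quantifier: I must fix that ``dense realizations'' means the set of sample paths -- equivalently, the topological support of the induced Radon measure -- is all of $\F$, and that this support statement is stable under finite products and under continuous pushforwards. I expect this to be the main obstacle, and I would make it precise by phrasing density at the level of topological supports of the Gaussian measures rather than pathwise, which simultaneously sidesteps measurability concerns for the pushforward.
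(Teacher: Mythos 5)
The paper does not prove this proposition itself but cites \cite[Prop.~3.5]{LH_AlgorithmicLinearlyConstrainedGaussianProcessesBoundaryConditions}, whose argument is essentially the one you give: continuity of $B$ on the Fr\'echet space (as recorded before Corollary~\ref{coro_pushforward}), density of the realizations --- i.e.\ full topological support --- of the i.i.d.\ prior $g_\F^{\ell''\times 1}$, and the elementary fact that a continuous map sends a dense subset of $X$ onto a dense subset of its image $B\F^{\ell''\times 1}=\sol_\F(A)$. One caution on your second paragraph: the set of joint realizations $\{(g_1(\omega),\dots,g_{\ell''}(\omega))\}$ is only a subset of the product of the componentwise realization sets (think of perfectly coupled components), so ``a finite product of dense sets is dense'' is not by itself the right deduction; your own fallback via $\operatorname{supp}(\mu_1\otimes\cdots\otimes\mu_{\ell''})=\prod_i\operatorname{supp}(\mu_i)$ --- which genuinely uses independence, or equivalently the positivity of $\prod_i P(g_i\in U_i)$ for open $U_i$ meeting each support --- is the correct way to carry out that step, and with it the proof goes through.
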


We summarize how to algorithmically decide whether a parametrization exists and how to compute it in the positive case.
Computations directly over the space of functions $\F$ are infeasible.
Hence, we compute over $R$ instead.
Inferring results over $\F$ is possible once $\F$ is an injective\footnote{%
	In algebraic system theory, one usually works with injective cogenerators $\F$ \cite{Q_habil}.
	Injective cogenerators allow to infer back from analysis in $\F$ to algebra over $\ringdiffop$.
	In our setting, this step back is superfluous, as the algebra cannot encode data points.
} $R$-module, i.e.\ $\Hom_R(-,\F)$ is exact.
%In our case, the injectivity of $\F$ is equivalent to $\F$ being ``big enough'' to allow for a solution $f\in\F^{\ell''}$ for any inhomogeneous linear systems of differential equations $Bf=u$ defined by a matrix $B\in R^{\ell\times\ell''}$ with right hand side $u\in\F^{\ell}$.
Luckily, for PDEs with constant coefficients we have the following:

\begin{theorem}[{\cite{malgrange1956existence}, \cite{ehrenpreis1954solution} \cite[\textsection2(54)]{Ob}, }]\label{theorem_injective_module}
    Let $R=\R[\partial_{x_1},\ldots,\partial_{x_d}]$ be as in Example~\ref{example_linear_dgl} and $\domain\subset\R^d$ convex.
    Then, $\F=C^\infty(\domain,\R)$ is an injective $R$-module.
\end{theorem}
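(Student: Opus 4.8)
The plan is to convert the module-theoretic assertion of injectivity into an analytic solvability statement for constant-coefficient systems on the convex domain $\domain$, and then to invoke the classical existence theorems of Ehrenpreis and Malgrange. Since $R=\R[\partial_{x_1},\ldots,\partial_{x_d}]$ is a commutative polynomial ring, it is Noetherian and, by Hilbert's syzygy theorem, every finitely generated $R$-module $M$ admits a \emph{finite} free resolution
$$0\to R^{1\times\ell_s}\xrightarrow{\cdot A_s}\cdots\xrightarrow{\cdot A_2}R^{1\times\ell_1}\xrightarrow{\cdot A_1}R^{1\times\ell_0}\to M\to 0,$$
with matrices $A_i$ acting by right multiplication and satisfying $A_{i+1}A_i=0$. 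Injectivity of $\F$ is equivalent to $\operatorname{Ext}^i_R(M,\F)=0$ for all such $M$ and all $i\ge 1$. Applying the left-exact functor $\Hom_R(-,\F)$ to the resolution identifies $\Hom_R(R^{1\times\ell_i},\F)$ with the column space $\F^{\ell_i\times1}$ and turns $\cdot A_i$ into left multiplication, so these Ext-groups are the cohomologies of the complex
$$\F^{\ell_0\times1}\xrightarrow{A_1}\F^{\ell_1\times1}\xrightarrow{A_2}\cdots\xrightarrow{A_s}\F^{\ell_s\times1}.$$
Thus injectivity reduces to exactness of this complex, i.e.\ to the \emph{fundamental principle}: whenever $g\in\F^{\ell_i\times1}$ satisfies the compatibility condition $A_{i+1}g=0$, the inhomogeneous system $A_i u=g$ has a solution $u\in\F^{\ell_{i-1}\times1}$ (the inclusion $\image(A_i)\subseteq\kernel(A_{i+1})$ is automatic from $A_{i+1}A_i=0$).

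First I would settle the base case of a single scalar operator, $s=1$ with $\ell_0=\ell_1=1$: surjectivity of $P(\partial)\colon\F\to\F$ for $0\neq P\in R$. This is Malgrange's theorem, resting on the Malgrange--Ehrenpreis existence of a fundamental solution together with the fact that every convex set is $P$-convex for supports for each $P$; here the convexity hypothesis on $\domain$ enters decisively (smoothness on the compact convex $\domain$ is read as germs on open convex neighbourhoods, so the open-convex results apply). I would then treat the general system by the Ehrenpreis--Palamodov fundamental principle, which represents every element of $\kernel(A_{i+1})$ by Fourier--Laplace integrals supported on the characteristic variety of the system and, on a convex domain, produces a preimage under $A_i$ from a suitable noetherian-operator decomposition of the module. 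Climbing the finite resolution one step at a time, exactness at each $\F^{\ell_i\times1}$ follows, and the complex is exact.

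The genuinely hard part is entirely analytic and is exactly what I would \emph{cite} rather than reprove: the existence of fundamental solutions and, more seriously, the Ehrenpreis--Palamodov representation of solutions that yields solvability of compatible systems on convex domains. The algebraic reduction above is formal once $R$ is known to be Noetherian of finite global dimension; all the difficulty---and all the use of convexity---is concentrated in the surjectivity/fundamental-principle input, which is established in \cite{malgrange1956existence,ehrenpreis1954solution} and summarised in \cite[\textsection2(54)]{Ob}. Consequently the proof is a homological reduction of injectivity to these classical PDE existence theorems.
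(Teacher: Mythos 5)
The paper does not prove this theorem at all---it is quoted from Malgrange, Ehrenpreis and Oberst---and your proposal is precisely the standard route those sources take: reduce injectivity over the Noetherian ring $R$ to vanishing of $\operatorname{Ext}^i_R(M,\F)$, identify these with the cohomology of the dualized finite free resolution, and delegate the resulting solvability statement for compatible constant-coefficient systems on convex domains to the Malgrange--Ehrenpreis existence theorems and the Ehrenpreis--Palamodov fundamental principle. So your argument is correct and takes essentially the same approach as the paper's citation; the only delicate point is the passage from open convex to compact convex $\domain$, which you already flag and which is handled in the cited source \cite[\textsection2]{Ob}.
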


With this in mind, we recall the construction of parametrizations.

\begin{theorem}[{%
\cite[Thm.~2]{ZSHinverseSyzygies},
%\cite[Thm.~3, Alg.~1, Lemma~1.2.3]{Zerz};
\cite[\S7.(24)]{Ob},
\cite{QGrade},
\cite{Q_habil},
%\cite{SZinverseSyzygies}
\cite{CQR05_nonote},
\cite{RobRecentProgress}%
}]\label{theorem_parametrizable}
    Let $\ringdiffop$ be a ring and $\F$ an injective left $\ringdiffop$-module.
	Let $A\in R^{\ell'\times\ell}$.
	Let $B$ be the right nullspace of $A$ and $A'$ the left nullspace of $B$.
	Then $\sol_\F(A')$ is the largest subset of $\sol_\F(A)$ that is parametrizable, $B$ parametrizes $\sol_\F(A')$, and $\sol_\F(A)$ is parametrizable if and only if the rows of $A$ and $A'$ generate the same row module, i.e.\ if all rows of $A'$ are contained in the row module generated by $A$.
	%In this case, $\sol_\F(A)$ is parametrized by $B$. 
\end{theorem}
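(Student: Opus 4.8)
The plan is to translate the statement, which concerns solution sets in the analytic object $\F$, into homological algebra over $\ringdiffop$, where injectivity of $\F$ becomes exactness of the contravariant functor $\Hom_\ringdiffop(-,\F)$. The bridge is the natural isomorphism $\Hom_\ringdiffop(\ringdiffop^{1\times n},\F)\cong\F^{n\times1}$ given by evaluation on the standard row basis, under which right multiplication $\cdot A\colon\ringdiffop^{1\times\ell'}\to\ringdiffop^{1\times\ell}$ is carried to left multiplication $A\cdot\colon\F^{\ell\times1}\to\F^{\ell'\times1}$; in particular $\sol_\F(A)=\ker(A\cdot)\cong\Hom_\ringdiffop(M,\F)$ for the finitely presented left module $M:=\ringdiffop^{1\times\ell}/\ringdiffop^{1\times\ell'}A$, and likewise $\sol_\F(A')\cong\Hom_\ringdiffop(M',\F)$ with $M':=\ringdiffop^{1\times\ell}/\ringdiffop^{1\times\ell'''}A'$. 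All sequences below are sequences of free left $\ringdiffop$-modules whose maps are right multiplications, hence left-module homomorphisms to which $\Hom_\ringdiffop(-,\F)$ applies.

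Next I would record the two defining syzygy relations as exactness statements. That $B$ is the right nullspace of $A$ means the sequence of right modules $\ringdiffop^{\ell''\times1}\xrightarrow{B\cdot}\ringdiffop^{\ell\times1}\xrightarrow{A\cdot}\ringdiffop^{\ell'\times1}$ is exact at the middle term, while $A'$ being the left nullspace of $B$ means the sequence of left modules $\ringdiffop^{1\times\ell'''}\xrightarrow{\cdot A'}\ringdiffop^{1\times\ell}\xrightarrow{\cdot B}\ringdiffop^{1\times\ell''}$ is exact at the middle term. Applying the exact contravariant functor $\Hom_\ringdiffop(-,\F)$ to the second sequence and using the identification above yields exactness of $\F^{\ell''\times1}\xrightarrow{B\cdot}\F^{\ell\times1}\xrightarrow{A'\cdot}\F^{\ell'''\times1}$ at $\F^{\ell\times1}$, that is $B\F^{\ell''\times1}=\im(B\cdot)=\ker(A'\cdot)=\sol_\F(A')$. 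This proves that $B$ parametrizes $\sol_\F(A')$. For the inclusion $\sol_\F(A')\subseteq\sol_\F(A)$, note that $AB=0$ forces every row of $A$ into the left nullspace of $B$, hence into the row module of $A'$; writing $A=CA'$ shows $A'f=0\Rightarrow Af=0$.

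For maximality and the criterion I would argue as follows. If $S\subseteq\sol_\F(A)$ is any parametrizable subset, say $S=P\F^{m\times1}$ with $P\in\ringdiffop^{\ell\times m}$, then $A\cdot P\cdot=0$ on $\F^{m\times1}$; reflecting this back to an identity $AP=0$ in $\ringdiffop$ puts the columns of $P$ into the right nullspace of $A$, so $P=BQ$ for some $Q$ because $B$ generates that nullspace, whence $S=\im(P\cdot)\subseteq\im(B\cdot)=\sol_\F(A')$. Thus $\sol_\F(A')$ is the largest parametrizable subset of $\sol_\F(A)$. For the last claim, the inclusion of row modules $\ringdiffop^{1\times\ell'}A\subseteq\ringdiffop^{1\times\ell'''}A'$ always holds by $A=CA'$; if the two row modules coincide, then also $A'=DA$, so $\sol_\F(A)=\sol_\F(A')$ is parametrized by $B$. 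Conversely, if $\sol_\F(A)$ is parametrizable, maximality gives $\sol_\F(A)\subseteq\sol_\F(A')$ and hence equality, from which I would deduce equality of the row modules.

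The main obstacle is precisely the two places where one must pass \emph{back} from the analytic side to the algebra: concluding $AP=0$ from the vanishing of the operator $(AP)\cdot$ on $\F^{m\times1}$, and deducing equality of the row modules of $A$ and $A'$ from the equality $\sol_\F(A)=\sol_\F(A')$. Both require that $\F$ detect nonzero operators, i.e.\ that $\F$ be faithful, and more precisely a cogenerator: the short exact sequence $0\to t\to M\to M'\to 0$, where $t=\ker(M\twoheadrightarrow M')$ is the torsion obstruction measuring the gap between the two row modules, dualizes under the exact functor $\Hom_\ringdiffop(-,\F)$ to $\Hom_\ringdiffop(t,\F)=0$, and only the cogenerator property lets one conclude $t=0$. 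This is the role of the injective cogenerator standard in algebraic system theory; in the present setting it is available for $\F=C^\infty(\domain,\R)$ on a convex domain through Theorem~\ref{theorem_injective_module} together with the cogenerator property of these function spaces.
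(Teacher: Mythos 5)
The paper does not actually prove this theorem---it is quoted from the cited literature---but your argument is precisely the standard homological proof given in those sources: dualize the two syzygy sequences under the exact contravariant functor $\Hom_{\ringdiffop}(-,\F)$ to obtain $B\F^{\ell''\times 1}=\kernel(A'\cdot)=\sol_\F(A')$, and use $AB=0$, hence $A=CA'$, for the inclusion $\sol_\F(A')\subseteq\sol_\F(A)$; both steps are correct and need only injectivity. Your further observation is also correct and worth stressing: the maximality claim and the ``only if'' direction require passing back from analysis to algebra (from $(AP)\F^m=0$ to $AP=0$, and from $\Hom_{\ringdiffop}(t,\F)=0$ to $t=0$ for the torsion module $t$), which needs $\F$ to be a cogenerator and not merely injective---exactly the point the paper concedes in its footnote on injective cogenerators, and a hypothesis all the cited sources impose (and which does hold for $C^\infty(\domain,\R)$ over $\R[\partial_{x_1},\ldots,\partial_{x_d}]$ on a convex domain). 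So your proof is sound, with the caveat that you are rightly supplying a hypothesis that the theorem statement as printed omits.
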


Gr\"obner bases
%, as in Sects.~\ref{section_Ore} and \ref{section_modules} make 
turn Theorem~\ref{theorem_parametrizable} effective, as they allow to compute the right nullspace $B$ of $A$, the left nullspace $A'$ of $B$ and decide whether the rows of $A'$ are contained in the row space of $A$ over $R$.
%There also exists the following abstract criterion for parametrizability of systems.
We have the following criterion.

\begin{theorem}[{\cite[\S7.(21)]{Ob}}]\label{theorem_controllable_parametrizable}
	A system $\sol_\F(A)$ is parametrizable if and only if it is controllable.
    If $A$ is not parametrizable, then the solution set $\sol_\F(A')$ is the subset of controllable behaviors in $\sol_\F(A)$, where $A'$ is defined as in Theorem~\ref{theorem_parametrizable}.
\end{theorem}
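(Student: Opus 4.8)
The plan is to reduce both notions to one algebraic condition on the system module and then feed it into Theorem~\ref{theorem_parametrizable}. Write $M:=R^{1\times\ell}/(R^{1\times\ell'}A)$ for the left $R$-module presented by the rows of $A$; under the tautological identification $\sol_\F(A)\cong\Hom_R(M,\F)$ the behaviour of the system is governed by $M$. The central claim I would prove is that $A'$ presents the torsion-free quotient $M/t(M)$, where $t(M)$ denotes the torsion submodule of $M$. Once this is established, the criterion of Theorem~\ref{theorem_parametrizable} (that the rows of $A$ and of $A'$ generate the same row module) becomes the condition $t(M)=0$, i.e.\ that $M$ is torsion-free, and it only remains to recognise torsion-freeness as controllability.

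First I would record the row-module containment $R^{1\times\ell'}A\subseteq R^{1\times r}A'$, where $r$ is the number of rows of $A'$. Since $B$ is the right nullspace of $A$ we have $AB=0$, so every row of $A$ annihilates $B$ on the left; as the rows of $A'$ generate the full left nullspace of $B$, each row of $A$ lies in $R^{1\times r}A'$. This yields a canonical surjection $M\twoheadrightarrow M':=R^{1\times\ell}/(R^{1\times r}A')$. Next I would show $M'$ is torsion-free: because $R^{1\times r}A'=\ker(\,v\mapsto vB:R^{1\times\ell}\to R^{1\times\ell''}\,)$, the first isomorphism theorem gives $M'\cong\im(v\mapsto vB)$, a submodule of the free module $R^{1\times\ell''}$, hence torsion-free over the domain $R$. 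Finally a rank count over the field of fractions $Q=\operatorname{Quot}(R)$ shows the kernel of $M\twoheadrightarrow M'$ is torsion: the columns of $B$ span the $Q$-right-nullspace of $A$, so $\rk_Q B=\ell-\rk_Q A$, whence the left nullspace of $B$ has $Q$-dimension $\rk_Q A=\rk_Q A'$; combined with the containment above, the two row modules coincide after $\otimes_R Q$. Therefore $\ker(M\to M')$ is torsion, and since $M'$ is torsion-free it contains all of $t(M)$; thus $\ker(M\to M')=t(M)$ and $M'\cong M/t(M)$.

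With this identification the parametrizability criterion of Theorem~\ref{theorem_parametrizable} reads $M=M'$, equivalently $t(M)=0$, so $\sol_\F(A)$ is parametrizable exactly when $M$ is torsion-free. To close the equivalence I would invoke the algebraic characterization of controllability for behaviours over $R$ as in \cite[\S7]{Ob}: $\sol_\F(A)$ is controllable precisely when $M$ is torsion-free. This is the step where the injective-cogenerator property of $\F$ from Theorem~\ref{theorem_injective_module} is used, translating the concatenation definition of controllability into module language. Chaining the two equivalences proves \emph{parametrizable $\Leftrightarrow$ controllable}. For the second assertion, Theorem~\ref{theorem_parametrizable} identifies $\sol_\F(A')\cong\Hom_R(M/t(M),\F)$ as the largest parametrizable subbehaviour of $\sol_\F(A)$; since parametrizability and controllability coincide and $M/t(M)$ is the universal torsion-free quotient of $M$, the set $\sol_\F(A')$ is exactly the controllable part of $\sol_\F(A)$.

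The main obstacle is the equivalence between controllability and torsion-freeness of $M$. The algebraic half — that $A'$ presents $M/t(M)$ — is routine homological algebra, as sketched above. The substantive part is relating the behavioural, analytically formulated notion of controllability (patching of trajectories) to the purely algebraic torsion-freeness of $M$; this rests on the duality supplied by $\F$ being an injective cogenerator and, in Oberst's treatment, on nontrivial approximation arguments inside the function space $\F$, which is why I would import it as a cited result rather than reprove it here.
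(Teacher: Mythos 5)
The paper does not prove this theorem at all: it is imported verbatim from Oberst \cite[\S7.(21)]{Ob}, so there is no in-paper argument to compare against. Your blind reconstruction is the standard one and is essentially sound. The algebraic half is correct: with $M=R^{1\times\ell}/(R^{1\times\ell'}A)$ and $M'=R^{1\times\ell}/(R^{1\times r}A')$, the containment $R^{1\times\ell'}A\subseteq R^{1\times r}A'$ follows from $AB=0$, the identification $M'\cong\im(v\mapsto vB)$ shows $M'$ is torsion-free, and the rank count over the (skew) field of fractions shows $\ker(M\twoheadrightarrow M')$ is torsion, whence $M'\cong M/t(M)$; combined with the criterion of Theorem~\ref{theorem_parametrizable} this gives \emph{parametrizable} $\Leftrightarrow$ $t(M)=0$. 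Two small caveats: the rank argument needs $R$ to be a two-sided Ore domain so that both the right nullspace of $A$ and the left nullspace of $B$ have the expected dimensions over one and the same quotient skew field (true here, since $R$ is a Noetherian domain, cf.\ Remark~\ref{rem:Oreproperty}); and your sentence ``since $M'$ is torsion-free it contains all of $t(M)$'' should read that the kernel of $M\to M'$ contains $t(M)$ because $t(M)$ must map into $t(M')=0$. The genuinely hard step — that behavioural controllability of $\sol_\F(A)$ is equivalent to torsion-freeness of $M$ — you import from Oberst, exactly as the paper implicitly does; this is the part that uses $\F$ being an injective cogenerator and cannot be made ``routine,'' so citing it is the right call. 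In effect you have supplied more of an argument than the paper does, reducing the cited statement to the single duality result rather than citing the whole theorem.
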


Solution sets of differential equations and polynomial boundary conditions can be intersected \cite{LH_AlgorithmicLinearlyConstrainedGaussianProcessesBoundaryConditions}.

\begin{theorem}[{\cite[Theorem~5.2]{LH_AlgorithmicLinearlyConstrainedGaussianProcessesBoundaryConditions}}]\label{theorem_combining_parametrizations}
	Let $B_1\in R^{\ell\times\ell_1''}$ and $B_2\in R^{\ell\times\ell_2'}$.
	Denote by 
	%\begin{align*}
	$C:=\begin{bmatrix} C_1 \\ C_2\end{bmatrix}\in R^{(\ell_1'+\ell_2')\times m}$
	%\end{align*}
	the right-nullspace of the matrix $B:=\begin{bmatrix} B_1 & B_2\end{bmatrix}\in R^{\ell\times(\ell_1'+\ell_2')}$.
	Then $B_1C_1=-B_2C_2$ parametrizes solutions of $B_1\F^{\ell_1'}\cap B_2\F^{\ell_2'}$.
\end{theorem}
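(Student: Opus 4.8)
The plan is to establish the set equality $(B_1C_1)\,\F^m = B_1\F^{\ell_1'}\cap B_2\F^{\ell_2'}$ by a double inclusion, after first extracting the block identity hidden in the nullspace condition. Writing $B=\begin{bmatrix}B_1 & B_2\end{bmatrix}$ and $C=\begin{bmatrix}C_1\\ C_2\end{bmatrix}$, the product splits as $BC = B_1C_1+B_2C_2$, so the defining condition $BC=0$ is exactly $B_1C_1=-B_2C_2$. This already yields the stated identity and, more usefully, exhibits $B_1C_1$ as a map that factors through both $B_1$ and $B_2$.

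For the inclusion $\subseteq$, I would take any $w\in\F^m$ and observe that $B_1C_1w = B_1(C_1w)\in B_1\F^{\ell_1'}$, while simultaneously $B_1C_1w = -B_2C_2w = B_2(-C_2w)\in B_2\F^{\ell_2'}$. Hence every element of $(B_1C_1)\F^m$ lies in the intersection. This direction is purely formal and uses nothing beyond $BC=0$ and the block decomposition.

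The inclusion $\supseteq$ is where the substance lies, and it rests on interpreting $C$ through Theorem~\ref{theorem_parametrizable}: as the right nullspace of $B$, the matrix $C$ parametrizes $\sol_\F(B)$, that is, $C\F^m=\sol_\F(B)$. Given $h\in B_1\F^{\ell_1'}\cap B_2\F^{\ell_2'}$, I would write $h=B_1f_1=B_2f_2$; then $B\begin{bmatrix}f_1\\ -f_2\end{bmatrix}=B_1f_1-B_2f_2=0$, so $\begin{bmatrix}f_1\\ -f_2\end{bmatrix}\in\sol_\F(B)=C\F^m$. Pulling this back to some $w\in\F^m$ with $\begin{bmatrix}f_1\\ -f_2\end{bmatrix}=Cw$ forces $f_1=C_1w$, and therefore $h=B_1f_1=B_1C_1w\in(B_1C_1)\F^m$, closing the inclusion.

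The hard part will be justifying the equality $C\F^m=\sol_\F(B)$ invoked in the $\supseteq$ step, since it is not automatic: by Theorem~\ref{theorem_parametrizable} the right nullspace $C$ only parametrizes the largest parametrizable — equivalently, by Theorem~\ref{theorem_controllable_parametrizable}, controllable — subset of $\sol_\F(B)$. The argument thus hinges on $\sol_\F(B)$ being parametrizable and on $\F$ being an injective $R$-module (Theorem~\ref{theorem_injective_module}), so that the purely algebraic computation of $C$ over $R$ via Gr\"obner bases faithfully reflects the solution behavior in $\F$ and the required preimage $w$ genuinely exists. In the non-controllable case the conclusion should be read as parametrizing the largest parametrizable subset of the intersection, and I would state it accordingly.
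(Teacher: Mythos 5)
The paper offers no proof of this statement---it is imported verbatim from the cited reference---so there is nothing internal to compare against; your argument has to stand on its own. Its skeleton is right: $BC=B_1C_1+B_2C_2=0$ gives the identity, and the inclusion $(B_1C_1)\F^m\subseteq B_1\F^{\ell_1'}\cap B_2\F^{\ell_2'}$ is complete as you wrote it. The whole weight rests on the step you flagged yourself, namely $C\F^m=\sol_\F(B)$.

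Your worry there is justified, and more strongly than you put it: that identity is not merely unproven, it is false in general, and with it the theorem as literally stated. Take $R=\R[\partial_x,\partial_y]$, $\F=C^\infty(\domain,\R)$ with $\domain$ convex and compact, $B_1=\begin{bmatrix}\partial_x\\ \partial_y\end{bmatrix}$ and $B_2=\begin{bmatrix}\partial_y\\ -\partial_x\end{bmatrix}$. Then $B_1\F\cap B_2\F$ is the set of gradients of harmonic functions, a nonzero space (it contains $(1,0)^T=B_1x=B_2y$), while $B=\begin{bmatrix}B_1&B_2\end{bmatrix}$ has nonzero determinant $-\partial_x^2-\partial_y^2$ over the domain $R$, so its right nullspace is $C=0$ and $B_1C_1$ parametrizes only $\{0\}$. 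It is worth isolating why injectivity of $\F$ does not rescue the step: ``$C$ is the right nullspace of $B$'' is exactness of a sequence of \emph{column} vectors over $R$, and transporting that exactness to $\F$ would require $\F$ to be flat, which $C^\infty$ is not; injectivity instead yields $C\F^m=\sol_\F(B'')$ for $B''$ the \emph{left} nullspace of $C$, and the inclusion $\sol_\F(B'')\subseteq\sol_\F(B)$ is an equality precisely when $\sol_\F(B)$ is parametrizable, equivalently controllable (Theorems~\ref{theorem_parametrizable} and \ref{theorem_controllable_parametrizable}). So the statement needs the additional hypothesis that $\sol_\F\bigl(\begin{bmatrix}B_1&B_2\end{bmatrix}\bigr)$ is controllable (to be verified for the concrete $B_1$, $B_2$ at hand, e.g.\ when $B_2$ is built from a nonzero function times the identity as in Sect.~\ref{section_examples}); under that hypothesis your argument closes. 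Your proposed fallback---that $B_1C_1$ parametrizes the largest parametrizable subset of the intersection---is the correct general reading and is consistent with the counterexample above.
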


Here, $B_1$ might be a matrix of differential operators and $B_2$ a matrix of polynomial functions, and we consider both matrices over a common ring $R$.
%, e.g.\ a Weyl algebra.  
%The following sections generalize polynomial boundaries to analytic boundaries by developing algorithms and describing boundary conditions.

\section{Rings of differential operators over differential algebras}\label{section_Ore}

We have considered parametrizations by differential operators and in Sect.~\ref{section_boundary} we consider parametrizations of boundary conditions by analytic functions.
For their combination in Sect.~\ref{section_examples}, we now extend classical Gr\"obner and Janet bases. %algorithms in a non-trivial way (cf.\ Assumption~\ref{assump:termorder})

Let $\domain\subset\R^d$ be connected and denote by $\derivation_1, \ldots, \derivation_d$ the commuting derivations in the coordinate directions of $\R^d$.
Let $\diffalg$ be a differential algebra over the real numbers\footnote{%
Of course, the constructions in this and the following section work over any sufficiently algorithmic differential field of characteristic zero, not only $\R$. In practice, we assume to work over a computable subfield of $\R$.
} $\R$ generated by analytic function $f_1,\ldots,f_r\colon \domain\to\R$.
%that extend the derivations of $\R$.
For algorithmic reasons assume that $\diffalg$ is finitely presented as a differential algebra over $\R$ as
\[
\diffalg \, = \, \R\{ f_1, \ldots, f_r \} \, \cong \, \R\{ F_1, \ldots, F_r \} / \diffprimeideal,
\]
where $\diffprimeideal$ is a prime differential ideal of $\R\{ F_1, \ldots, F_r \}$, generated by
\begin{equation}\label{eq:Pgenset}
\{ \, \derivation_j F_i - g_{i,j} \mid i = 1, \ldots, r, \, j = 1, \ldots, d \, \},
\end{equation}
where $g_{i,j} \in \R[F_1, \ldots, F_r]$ are (non-differential) polynomials in $F_1, \ldots, F_r$,
and the above isomorphism is given by $f_i\mapsto F_i + \diffprimeideal$.
In particular, the generators $f_1, \ldots, f_r$ of $\diffalg$ are algebraically independent over $\R$. Then $\diffalg$ is isomorphic to
%the polynomial ring
$\R[f_1, \ldots, f_r]$ as an $\R$-algebra, and $\diffalg$ is Noetherian, factorial, and a GCD domain.

%\begin{example}\label{ex:funcderiv1}
%For the differential algebra $\diffalg = \Q\{ x, \exp(x) \}$ with $\derivation_1 = \textrm{d}/\textrm{d}x$ we have $\diffalg \cong \Q\{ F_1, F_2 \} / P$, where $P$ is generated by $\derivation_1 F_1 - 1$, $\derivation_1 F_2 - F_2$, so that
%\begin{align*}
%    \Q\{ F_1, F_2 \} \longrightarrow \diffalg: F_1 \longmapsto x, \, F_2 \longmapsto \exp(x)
%\end{align*}
%is an epimorphism of differential algebras over $\Q$ mapping precisely $\diffprimeideal$ to zero. %do not say kernel here
%\end{example}

\begin{example}\label{ex:funcderiv2}
For the differential algebra $\diffalg = \Q\{ x, y, \exp(x^2+y^2-1) \}$
with derivations $\derivation_1 = \partial/\partial x$,
$\derivation_2 = \partial/\partial y$ we have
$\diffalg \cong \Q\{ F_1, F_2, F_3 \} / P$, where 
\begin{align*}
    \derivation_1 F_1 - 1, && \derivation_2 F_1, &&
    \derivation_1 F_2, && \derivation_2 F_2 - 1, &&
    \derivation_1 F_3 - 2 F_1 F_3, && \derivation_2 F_3 - 2 F_2 F_3,
\end{align*}
generate the prime differential ideal $P$ such that
\begin{align*}
    \Q\{ F_1, F_2, F_3 \} \longrightarrow  \diffalg: F_1 \longmapsto x, \, F_2 \longmapsto y, \, F_3 \longmapsto \exp(x^2+y^2-1)
\end{align*}
is an epimorphism of differential algebras over $\Q$ mapping precisely $\diffprimeideal$ to zero.
\end{example}

\begin{definition}\label{de:R}
Let the \emph{ring of differential operators} $\ringdiffop = \diffalg\langle \diffindet_1, \ldots, \diffindet_d \rangle$
be the iterated Ore extension of $\diffalg$ defined by
\[
\begin{array}{rclll}
\diffindet_i \, a & = & a \, \diffindet_i + \derivation_i(a), \quad
& \quad a \in \diffalg, & \quad i = 1, \ldots, d,\\[0.5em]
\diffindet_i \, \diffindet_j & = & \diffindet_j \, \diffindet_i,
& & \quad i, j = 1, \ldots, d.
\end{array}
\]
%More precisely (cf.\ \cite{ChyzakSalvy}),
%$
%    \ringdiffop = \diffalg[\diffindet_1; \operatorname{id}, \delta_1][\diffindet_2; \operatorname{id}, \delta_2]\ldots[\diffindet_d; \operatorname{id}, \delta_d]
%$
%where the operation of the derivation $\delta_i$ on $\diffalg$ is extended by $\delta_i(\diffindet_j) = \diffindet_j$ for $i, j = 1, \ldots, d$.
\end{definition}

\begin{remark}\label{rem:Oreproperty}
The ring $\ringdiffop$ is (left) Noetherian, because $\diffalg$ is Noetherian
(cf., e.g., \cite[Thm.~1.2.9 (iv)]{McConnellRobson}).
Moreover, $\ringdiffop$ has the left Ore property, i.e., every pair of non-zero elements
of $\ringdiffop$ has a non-zero common left multiple \cite[Thm.~2.1.15]{McConnellRobson}, which, in particular, implies the existence of
a skew field of fractions of $\ringdiffop$.
\end{remark}

We define the set of \emph{monomials} of $\ringdiffop$ as
\[
\Mon(\ringdiffop) = \{ \, f_1^{\alpha_1} \ldots f_r^{\alpha_r} \, \diffindet_1^{\beta_1} \ldots \diffindet_d^{\beta_d} \mid
\alpha_1, \ldots, \alpha_r, \beta_1, \ldots, \beta_d \in \Z_{\ge 0} \, \}.
\]
It is a basis of $\ringdiffop$ as an $\R$-vector space:
every $p \in \ringdiffop$ has a unique representation
\begin{equation}\tag{$*$}\label{eq:repres_element}
p \, = \, \sum_{m \in \Mon(\ringdiffop)} c_m \, m,
\end{equation}
where $c_m \in \R$ and only finitely many $c_m$ are non-zero.

\medskip

A \emph{monomial ordering} $<$ on $\ringdiffop$ is a total ordering on $\Mon(\ringdiffop)$ satisfying
\[
\begin{array}{ll}
f_1^0 \ldots f_r^0 \, \diffindet_1^0 \ldots \diffindet_d^0 = 1 < m &
\quad \mbox{for all } 1 \neq m \in \Mon(\ringdiffop),\\[0.5em]
m_1 < m_2 \quad \Rightarrow \quad f_i \, m_1 < f_i \, m_2 & \quad
\mbox{for all } m_1, m_2 \in \Mon(\ringdiffop), \, i = 1, \ldots, r,\\[0.5em]
m_1 < m_2 \quad \Rightarrow \quad m_1 \, \diffindet_j < m_2 \, \diffindet_j & \quad
\mbox{for all } m_1, m_2 \in \Mon(\ringdiffop), \, j = 1, \ldots, d.
\end{array}
\]
For every $0 \neq p \in \ringdiffop$ the $<$-greatest monomial $m$
occurring with non-zero coefficient $c_m$
in the representation \eqref{eq:repres_element} of $p$ is
called the \emph{leading monomial} of $p$ and is denoted by $\lm(p)$.
Its coefficient $c_m$ is called the \emph{leading coefficient} of $p$
and is denoted by $\lc(p)$.
For a subset $S$ of $\ringdiffop$ we let $\lm(S) = \{ \, \lm(s) \mid 0 \neq s \in S \, \}$.

% \begin{example}\label{ex:degrevlex}\todo{do we need these orderings in detail? in particular with our assumption below?}
% Let $(v_1, \ldots, v_r, w_1, \ldots, w_d)$ be an $(r+d)$-tuple
% of positive rational numbers.
% The \emph{weighted degree-reverse-lexicographical ordering} $<$ with
% weights $(v_1, \ldots, v_r, w_1, \ldots, w_d)$ is defined by
% \[
% \begin{array}{ll}
% & f_1^{\alpha_1} \ldots f_r^{\alpha_r} \,
% \diffindet_1^{\beta_1} \ldots \diffindet_d^{\beta_d} \, < \,
% f_1^{\alpha'_1} \ldots f_r^{\alpha'_r} \,
% \diffindet_1^{\beta'_1} \ldots \diffindet_d^{\beta'_d}\\[1em]
% \iff & \displaystyle
% \Big( \, \sum_{i=1}^r v_i \, \alpha_i + \sum_{j=1}^d w_j \, \beta_j \, < \,
% \sum_{i=1}^r v_i \, \alpha'_i + \sum_{j=1}^d w_j \, \beta'_j \qquad \mbox{or}\\[1em]
% & \displaystyle\qquad
% \sum_{i=1}^r v_i \, \alpha_i + \sum_{j=1}^d w_j \, \beta_j \, = \,
% \sum_{i=1}^r v_i \, \alpha'_i + \sum_{j=1}^d w_j \, \beta'_j \quad \mbox{and}\\[1.25em]
% & \displaystyle\qquad
% (\alpha_1, \ldots, \alpha_r, \beta_1, \ldots, \beta_d) >_{{\rm revlex}}
% (\alpha'_1, \ldots, \alpha'_r, \beta'_1, \ldots, \beta'_d) \, \Big),
% \end{array}
% \]
% where $\alpha_i$, $\alpha'_i$, $\beta_j$, $\beta'_j \in \Z_{\ge 0}$ and where $>_{{\rm revlex}}$
% compares multi-indices lexicographically starting with the last entry.
% \end{example}

\begin{example}\label{ex:degrevlex}
The \emph{weighted degree-reverse-lexicographical ordering} $<$ with
weights $w=(w_1, \ldots, w_{r+d})\in\Q_{>0}^{r+d}$ (weighted deg-rev-lex) is defined by
\begin{align*}
f_1^{\alpha_1} \ldots f_r^{\alpha_r} \diffindet_1^{\alpha_{r+1}} \ldots \diffindet_d^{\alpha_{r+d}}
<
f_1^{\alpha'_1} \ldots f_r^{\alpha'_r} \diffindet_1^{\alpha'_{r+1}} \ldots \diffindet_d^{\alpha'_{r+d}} \\
\iff \quad
\Bigg(\sum_{j=i}^{r+d} w_i \alpha_i, \alpha_{r+d}, \ldots, \alpha_1\Bigg)
>_{{\rm lex}}
\Bigg(\sum_{j=i}^{r+d} w_i \alpha'_i, \alpha'_{r+d}, \ldots, \alpha'_1\Bigg),
\end{align*}
where $\alpha_i, \alpha'_i\in \Z_{\ge 0}$ and $>_{{\rm lex}}$
compares tuples lexicographically.
\end{example}

\begin{example}\label{ex:elimpartials}
We let the \emph{elimination ordering} $<$ on $\ringdiffop$
(eliminating $\diffindet_1$, \ldots, $\diffindet_d$) be
\[
\begin{array}{ll}
& f_1^{\alpha_1} \ldots f_r^{\alpha_r} \,
\diffindet_1^{\beta_1} \ldots \diffindet_d^{\beta_d} \, < \,
f_1^{\alpha'_1} \ldots f_r^{\alpha'_r} \,
\diffindet_1^{\beta'_1} \ldots \diffindet_d^{\beta'_d}\\[1em]
\iff & \displaystyle
\Big( \, \diffindet_1^{\beta_1} \ldots \diffindet_d^{\beta_d} \, \prec_{\diffindet} \,
\diffindet_1^{\beta'_1} \ldots \diffindet_d^{\beta'_d} \qquad \mbox{or}\\[1em]
& \displaystyle\quad
\diffindet_1^{\beta_1} \ldots \diffindet_d^{\beta_d} \, = \,
\diffindet_1^{\beta'_1} \ldots \diffindet_d^{\beta'_d} \quad \mbox{and} \quad
f_1^{\alpha_1} \ldots f_r^{\alpha_r} \, \prec_f \,
f_1^{\alpha'_1} \ldots f_r^{\alpha'_r} \, \Big),
\end{array}
\]
where $\alpha_i$, $\alpha'_i$, $\beta_j$, $\beta'_j \in \Z_{\ge 0}$ and where $\prec_{\diffindet}$
and $\prec_f$ are the
%degree-reverse-le\-xi\-co\-graphi\-cal ordering
deg-rev-lex ordering
on the
%(commutative)
polynomial algebras $\Q[\diffindet_1, \ldots, \diffindet_d]$
and $\Q[f_1, \ldots, f_r]$, respectively.
\end{example}

\begin{assumption}\label{assump:termorder}
The monomial
ordering $<$ on $\ringdiffop$ is chosen such that the
leading monomial of
\[
\diffindet_j \, f_i \, = \,
f_i \, \diffindet_j + \delta_j(f_i) \, = \,
f_i \, \diffindet_j + g_{i,j}
\]
with respect to $>$ is $f_i \, \diffindet_j$,
for all $i = 1$, \ldots, $r$ and $j = 1, \ldots, d$.
(Recall that $f_i \, \diffindet_j + g_{i,j}$ is the
representation~\eqref{eq:repres_element} of $\diffindet_j \, f_i$
taking the generators (\ref{eq:Pgenset}) of
the prime differential ideal $\diffprimeideal$ into account.)
\end{assumption}

In what follows, we make Assumption~\ref{assump:termorder},
which is met if $>$ is a degree-reverse-lexicographical
ordering with weights $(v_1, \ldots, v_r, w_1, \ldots, w_d)$ satisfying
\[
w_j \, \ge \, \max_{i=1, \ldots, r} \left( \,
\sum_{k=1}^r v_k \deg_{f_k}(g_{i,j}) - v_i \, \right)
\qquad \mbox{for all } j = 1, \ldots, d,
\]
or if $>$ is an elimination ordering as in Example~\ref{ex:elimpartials}.

\medskip

Before introducing Janet bases for left ideals of $\ringdiffop$
we recall the concept of \emph{Janet division},
which we formulate for ideals of the free commutative semigroup $(\Z_{\ge 0})^{r+d}$
in our context. Note that if $\idealdiffop$ is a non-zero left ideal of $\ringdiffop$,
then the exponent vectors $(\alpha_1, \ldots, \alpha_r, \beta_1, \ldots, \beta_d)$
of all elements of $\lm(\idealdiffop)$ form an ideal of $(\Z_{\ge 0})^{r+d}$
due to the definition of a monomial ordering and Assumption~\ref{assump:termorder}.
The bijection between $\Mon(\ringdiffop)$ and $(\Z_{\ge 0})^{r+d}$ may as well
be chosen to be, e.g.,
\[
\exponentvector\colon
\Mon(\ringdiffop) \longrightarrow (\Z_{\ge 0})^{r+d}:
f_1^{\alpha_1} \ldots f_r^{\alpha_r} \, \diffindet_1^{\beta_1} \ldots \diffindet_d^{\beta_d} \longmapsto (\beta_1, \ldots, \beta_d, \alpha_1, \ldots, \alpha_r),
\]
which is the bijection we usually work with.

\medskip

Recall that every ideal of $(\Z_{\ge 0})^{r+d}$ is finitely generated;
moreover, it has a unique minimal generating set.
For $k \in \{ 1, \ldots, r+d \}$ we denote by $\multiindexone_k$ the
multi-index with $1$ in position $k$ and $0$ elsewhere.
Following M.~Janet (cf., e.g., \cite{Robertz6}) we make the following definition
in terms of exponent vectors.

\begin{definition}
Let $A \subset (\Z_{\ge 0})^{r+d}$ be finite and $\alpha = (\alpha_1, \ldots, \alpha_{r+d}) \in A$.
Then $\exponentvector^{-1}(\multiindexone_k)$ is said to be \emph{multiplicative}
for the monomial $\exponentvector^{-1}(\alpha)$ if and only if
\[
\alpha_k \, = \, \max \{ \, \alpha'_k \mid (\alpha'_1, \ldots, \alpha'_{r+d}) \in A \mbox{ with }
\alpha'_1 = \alpha_1, \, \ldots, \, \alpha'_{k-1} = \alpha_{k-1} \, \}.
\]
Let $M \subset \Mon(\ringdiffop)$ be finite. Then
for every $m \in M$ we obtain a partition $\mu(m, M) \uplus \overline{\mu}(m, M)$ of
$\{ f_1, \ldots, f_r, \diffindet_1, \ldots, \diffindet_d \}$, where each
element of $\mu(m, M)$ is multiplicative for $m$
and each element of $\overline{\mu}(m, M)$ is non-mul\-ti\-pli\-ca\-tive for $m$.
\end{definition}

\begin{example}\label{ex:janetdivision}
Let $r = 2$, $n = 1$,
$M = \{ \, f_1 \, f_2^2, \, f_1^2 \, f_2, \, f_2 \, \diffindet_1^2, \, f_1 \, \diffindet_1^2 \, \}$.
Using the above bijection $\exponentvector$ we obtain
\[
\begin{array}{l}
\mu(f_1 \, f_2^2, M) = \{ \, f_2 \, \}, \qquad
\mu(f_1^2 \, f_2, M) = \{ \, f_1, f_2 \, \},\\[0.5em]
\mu(f_2 \, \diffindet_1^2, M) = \{ \, \diffindet_1, f_2 \, \}, \qquad
\mu(f_1 \, \diffindet_1^2, M) = \{ \, \diffindet_1, f_1, f_2 \, \}.
\end{array}
\]
%\[
%\begin{array}{ll}
%\mu(f_1 \, f_2^2, M) = \{ \, f_2 \, \}, & \quad
%\overline{\mu}(f_1 \, f_2^2, M) = \{ \, \diffindet_1, f_1 \, \},\\[0.2em]
%\mu(f_1^2 \, f_2, M) = \{ \, f_1, f_2 \, \}, & \quad
%\overline{\mu}(f_1^2 \, f_2, M) = \{ \, \diffindet_1 \, \},\\[0.2em]
%\mu(f_2 \, \diffindet_1^2, M) = \{ \, \diffindet_1, f_2 \, \}, & \quad
%\overline{\mu}(f_2 \, \diffindet_1^2, M) = \{ \, f_1 \, \},\\[0.2em]
%\mu(f_1 \, \diffindet_1^2, M) = \{ \, \diffindet_1, f_1, f_2 \, \}, & \quad
%\overline{\mu}(f_1 \, \diffindet_1^2, M) = \emptyset.
%\end{array}
%\]
\end{example}

\begin{definition}\label{de:Janetcomplete}
Let $M \subset \Mon(\ringdiffop)$ be finite. We define two
supersets of $M$ in $\Mon(\ringdiffop)$ as follows:
\[
\begin{array}{rcl}
\langle \, M \, \rangle & = & \displaystyle
\bigcup_{m \in M}
\{ \, f_1^{\phi_1} \ldots f_r^{\phi_r} \, m \,
\diffindet_1^{\psi_1} \ldots \diffindet_d^{\psi_d} \mid
\, \phi_i, \psi_j \in \Z_{\ge 0} \, \},\\[1.5em]
[ \, M \, ] & = & \displaystyle
\biguplus_{m \in M}
\{ \, f_1^{\phi_1} \ldots f_r^{\phi_r} \, m \,
\diffindet_1^{\psi_1} \ldots \diffindet_d^{\psi_d} \mid
\, \phi_i, \psi_j \in \Z_{\ge 0},\\[1.5em]
& & \qquad \qquad \quad
\phi_i = 0 \mbox{ if }
f_i \not\in \mu(m, M) \mbox{ and }
\psi_j = 0 \mbox{ if }
\diffindet_j \not\in \mu(m, M) \, \},
\end{array}
\]
where the latter union is disjoint by construction of Janet division.
The set $M$ of monomials is said to be \emph{Janet complete}
if $[ \, M \, ] = \langle \, M \, \rangle$.
\end{definition}
% old version:
%\begin{definition}\label{de:Janetcomplete}
%Let $M \subset \Mon(\ringdiffop)$ be finite. We define
%\[
%\langle \, M \, \rangle =
%\bigcup_{{m \in M} \atop {m = f_1^{\alpha_1} \ldots f_r^{\alpha_r} \, \diffindet_1^{\beta_1} \ldots %\diffindet_d^{\beta_d}}}
%\hspace{-2em}
%\{ \, f_1^{\alpha_1 + \phi_1} \ldots f_r^{\alpha_r + \phi_r} \,
%\diffindet_1^{\beta_1 + \psi_1} \ldots \diffindet_d^{\beta_d + \psi_d} \mid
%\, \phi_i, \psi_j \in \Z_{\ge 0} \, \}
%\]
%and
%\[
%\begin{array}{rcl}
%[ \, M \, ] \!\! & \!\! = \!\! & \!\! \displaystyle
%\biguplus_{{m \in M} \atop {m = f_1^{\alpha_1} \ldots f_r^{\alpha_r} \, \diffindet_1^{\beta_1} \ldots %\diffindet_d^{\beta_d}}}
%\hspace{-2em}
%\{ \, f_1^{\alpha_1 + \phi_1} \ldots f_r^{\alpha_r + \phi_r} \,
%\diffindet_1^{\beta_1 + \psi_1} \ldots \diffindet_d^{\beta_d + \psi_d} \mid
%\, \phi_i, \psi_j \in \Z_{\ge 0},\\[2em]
%& & \qquad \qquad \qquad \qquad
%f_i \in \overline{\mu}(m, M) \Rightarrow \phi_i = 0, \,
%\diffindet_j \in \overline{\mu}(m, M) \Rightarrow \psi_j = 0 \, \},
%\end{array}
%\]
%which is a disjoint union by construction of Janet division.
%The set $M$ is said to be \emph{Janet complete}
%if $[ \, M \, ] = \langle \, M \, \rangle$.
%\end{definition}

Any finite subset $M$ of $\Mon(\ringdiffop)$ has a
unique smallest (finite) Janet complete superset of $M$, which we call
the \emph{Janet completion} of $M$ \cite[Subsect.~2.1.1]{Robertz6}.

% example for Janet completion
%\begin{example}\label{ex:janetcompletion}
%The Janet completion of the subset $M$ of $\Mon(\ringdiffop)$
%in Example~\ref{ex:janetdivision} is
%$\widehat{M} = \{ \, f_1 \, f_2^2, \, f_1^2 \, f_2, \, f_1 \, f_2 \, \diffindet_1, \,
%f_1^2 \, f_2 \, \diffindet_1, \, f_2 \, \diffindet_1^2, \, f_1 \, \diffindet_1^2 \, \}$ with
%\[
%\begin{array}{ll}
%\mu(f_1 \, f_2^2, \widehat{M}) = \{ \, f_2 \, \}, & \quad
%\overline{\mu}(f_1 \, f_2^2, \widehat{M}) = \{ \, \diffindet_1, f_1 \, \},\\[0.2em]
%\mu(f_1^2 \, f_2, \widehat{M}) = \{ \, f_1, f_2 \, \}, & \quad
%\overline{\mu}(f_1^2 \, f_2, \widehat{M}) = \{ \, \diffindet_1 \, \},\\[0.2em]
%\mu(f_1 \, f_2^2 \, \diffindet_1, \widehat{M}) = \{ \, f_2 \, \}, & \quad
%\overline{\mu}(f_1 \, f_2^2 \, \diffindet_1, \widehat{M}) = \{ \, \diffindet_1, f_1 \, \},\\[0.2em]
%\mu(f_1^2 \, f_2 \, \diffindet_1, \widehat{M}) = \{ \, f_1, f_2 \, \}, & \quad
%\overline{\mu}(f_1^2 \, f_2 \, \diffindet_1, \widehat{M}) = \{ \, \diffindet_1 \, \},\\[0.2em]
%\mu(f_2 \, \diffindet_1^2, \widehat{M}) = \{ \, \diffindet_1, f_2 \, \}, & \quad
%\overline{\mu}(f_2 \, \diffindet_1^2, \widehat{M}) = \{ \, f_1 \, \},\\[0.2em]
%\mu(f_1 \, \diffindet_1^2, \widehat{M}) = \{ \, \diffindet_1, f_1, f_2 \, \}, & \quad
%\overline{\mu}(f_1 \, \diffindet_1^2, \widehat{M}) = \emptyset.
%\end{array}
%\]
%\end{example}

\begin{definition}\label{de:GB}
Let $\idealdiffop$ be a non-zero left ideal of $\ringdiffop$.
Using the notation of Definition~\ref{de:Janetcomplete},
a finite generating set $G \subset \ringdiffop \setminus \{ 0 \}$
for $\idealdiffop$ is called a \emph{Gr\"obner basis} for $\idealdiffop$ with respect to
the monomial ordering $<$ if
%\[
$\langle \, \lm(G) \, \rangle = \lm(\idealdiffop)$.
%\]
If moreover, $\lm(G)$ is Janet complete, i.e.,
%\[
$[ \, \lm(G) \, ] = \langle \, \lm(G) \, \rangle = \lm(\idealdiffop)$,
%\]
then $G$ is called a \emph{Janet basis} for $\idealdiffop$ with respect to $<$.
\end{definition}

Assumption~\ref{assump:termorder} facilitates a multivariate polynomial division in $\ringdiffop$.

\begin{remark}\label{rem:reduction}
Suppose $L \subset \ringdiffop \setminus \{ 0 \}$ is finite and $\lm(L)$ is Janet complete.
Let $p_1 \in \ringdiffop \setminus \{ 0 \}$. If $\lm(p_1) \in [ \, \lm(L) \, ]$, then there
exists a unique $p_2 \in L$ such that
\[
\lm(p_1) \, = \, f_1^{\phi_1} \ldots f_r^{\phi_r} \, \lm(p_2) \,
\diffindet_1^{\psi_1} \ldots \diffindet_d^{\psi_d}
\]
for certain $\phi_i$, $\psi_j \in \Z_{\ge 0}$, where $\phi_i = 0$ if $f_i \not\in \mu(\lm(p_2), \lm(L))$
and $\psi_j = 0$ if $\diffindet_j \not\in \mu(\lm(p_2), \lm(L))$.
Therefore, subtracting $\lc(p_1) \, f_1^{\phi_1} \ldots f_r^{\phi_r} \, \diffindet_1^{\psi_1} \ldots \diffindet_d^{\psi_d} \, p_2$ from $\lc(p_2) \, p_1$ yields either zero or an element
of $\ringdiffop$ whose leading monomial is less than $\lm(p_1)$.
Since a monomial ordering $<$ does not admit infinitely descending chains of monomials,
this reduction procedure always terminates.
\end{remark}

Iterated reduction, as just defined, modulo a Gr\"obner basis
or a Janet basis for the left ideal $\idealdiffop$
allows to decide membership to $\idealdiffop$.

\begin{proposition}\label{prop:membership}
Let $G$ be a Gr\"obner basis or a Janet basis for the left ideal $\idealdiffop$
of $\ringdiffop$ with respect to any monomial ordering $<$, and let $p \in \ringdiffop$.
Then we have $p \in \idealdiffop$ if and only if the
remainder of reduction of $p$ modulo $G$ is zero.
\end{proposition}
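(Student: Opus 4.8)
The plan is to prove both implications by analyzing the reduction procedure of Remark~\ref{rem:reduction} and exploiting the defining Gr\"obner (resp.\ Janet) property $\langle\,\lm(G)\,\rangle=\lm(\idealdiffop)$ from Definition~\ref{de:GB}. The easy direction is immediate: every single reduction step replaces the current polynomial $p_1$ by $\lc(g)\,p_1$ minus a left multiple of some $g\in G$, hence by an element congruent to a nonzero scalar multiple of $p_1$ modulo the left ideal $\idealdiffop$ (since $G\subseteq\idealdiffop$ and $\idealdiffop$ is a left ideal). Because every leading coefficient in $\ringdiffop$ lies in the base field $\R$ by the representation~\eqref{eq:repres_element}, accumulating the finitely many leading coefficients encountered yields a nonzero scalar $c\in\R$ with $c\,p-r\in\idealdiffop$ for the final remainder $r$; as $c$ is invertible, $p\equiv c^{-1}r\pmod{\idealdiffop}$, and therefore $p\in\idealdiffop\iff r\in\idealdiffop$. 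In particular, if $r=0$ then $p\in\idealdiffop$.

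For the converse I would first record that the reduction is well defined and terminates. The crucial algebraic input is Assumption~\ref{assump:termorder}: left multiplication of a monomial $m$ by another monomial produces, besides lower-order correction terms coming from the commutation rules $\diffindet_j f_i=f_i\diffindet_j+g_{i,j}$, exactly the product of the two monomials as its leading term. Thus, whenever $\lm(p_1)$ is a (Janet, resp.\ ordinary) multiple of some $\lm(g)$ with $g\in G$, subtracting the appropriate monomial left multiple of $g$ cancels the leading term and strictly lowers $\lm(p_1)$, as stated in Remark~\ref{rem:reduction}. Since a monomial ordering is a well-ordering on $\Mon(\ringdiffop)$, iterating this (reducing the leading term and recursing on the tail) terminates in a remainder $r$ none of whose monomials is reducible modulo $G$.

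The heart of the proof is the contrapositive of the remaining implication. Suppose $p\in\idealdiffop$; by the congruence above, $r\in\idealdiffop$ as well. If $r\neq0$, then $\lm(r)\in\lm(\idealdiffop)$. Invoking the Gr\"obner property $\lm(\idealdiffop)=\langle\,\lm(G)\,\rangle$ — which for a Janet basis sharpens to $\lm(\idealdiffop)=[\,\lm(G)\,]=\langle\,\lm(G)\,\rangle$ by Definition~\ref{de:GB} — the leading monomial $\lm(r)$ lies in $\langle\,\lm(G)\,\rangle$ (resp.\ in $[\,\lm(G)\,]$), so $r$ is still reducible modulo $G$, contradicting that $r$ is a remainder. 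Hence $r=0$, which together with the easy direction completes the proof.

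I expect the main obstacle to be the careful bookkeeping in the second step: one must verify that, in the noncommutative ring $\ringdiffop$, left multiplication by a monomial acts on leading monomials exactly as multiplication in the commutative monoid on exponent vectors, so that the cancellation underlying a reduction step behaves as in the commutative polynomial case. This is precisely what Assumption~\ref{assump:termorder} guarantees, and it is also what reconciles the two formulations of reducibility (ordinary divisibility for a Gr\"obner basis versus Janet division for the Janet-complete $\lm(G)$ of a Janet basis). Once this is in place, the Gr\"obner property supplies the contradiction and the proposition follows.
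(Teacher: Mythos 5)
Your argument is correct and is precisely the standard one the paper leaves implicit: Proposition~\ref{prop:membership} is stated without a written proof, as a direct consequence of the reduction procedure of Remark~\ref{rem:reduction} together with the defining property $\langle\,\lm(G)\,\rangle=\lm(\idealdiffop)$ from Definition~\ref{de:GB}. Both halves of your proof --- the invariance of ideal membership under reduction steps (using that leading coefficients lie in the field $\R$, hence are invertible) and the contradiction derived from a nonzero fully reduced remainder lying in $\idealdiffop$ --- are exactly what the authors intend, including your observation that Assumption~\ref{assump:termorder} makes leading monomials behave multiplicatively and that $[\,\lm(G)\,]=\langle\,\lm(G)\,\rangle$ reconciles Janet division with ordinary divisibility.
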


\begin{remark}\label{rem:computJB}
Given a finite generating set $L$ for a non-zero left ideal $\idealdiffop$ of $\ringdiffop$
and given a monomial ordering $<$ as above, a
Janet basis for $\idealdiffop$ with respect to $<$ can be computed
in finitely many steps \cite{Robertz6}.
After a preliminary pairwise reduction of elements of $L$ ensuring that the leading monomials of
elements of $L$ are pairwise different and that $\exponentvector(\lm(L))$
is the unique minimal generating set of the ideal of $(\Z_{\ge 0})^{r+d}$ it generates,
multiplicative and non-mul\-ti\-pli\-ca\-tive variables are determined for each leading monomial
(with respect to $\lm(L)$) and $L$ is replaced by its Janet completion.
Reduction of left multiples of elements of $L$ by non-mul\-ti\-pli\-ca\-tive variables may yield
non-zero remainders in $\idealdiffop$. Augmenting $L$ by such elements results in
a larger ideal $\exponentvector(\lm(L))$ of $(\Z_{\ge 0})^{r+d}$ than previously.
Since every ascending chain of such ideals becomes stationary after finitely many steps,
by iteration of these steps, one obtains a generating set $G$ for $\idealdiffop$
whose left multiples by non-mul\-ti\-pli\-ca\-tive variables reduce to zero modulo $G$, which is
a Janet basis for $\idealdiffop$ with respect to $<$.
\end{remark}

\section{Module-theoretic constructions}\label{section_modules}
%\section{Module-theoretic constructions over rings of differential operators}\label{section_modules}

The techniques of Sect.~\ref{section_Ore} can be extended
to effectively deal with finitely presented left (and right) $\ringdiffop$-modules
and module homomorphisms between them.

\medskip

Let $\ringdiffop$ be as in the previous section and $q \in \N$. We choose the standard basis $e_1$, \ldots, $e_q$ of
the free left $\ringdiffop$-module $\ringdiffop^{1 \times q}$
and define the set of monomials
\[
\Mon(\ringdiffop^{1 \times q}) = \{ \, f_1^{\alpha_1} \ldots f_r^{\alpha_r} \, \diffindet_1^{\beta_1} \ldots \diffindet_d^{\beta_d} \, e_k \mid
\alpha_i, \beta_j \in \Z_{\ge 0}, \,
k = 1, \ldots, q \, \}.
\]
Then every element of $\ringdiffop^{1 \times q}$ has a unique
representation as in \eqref{eq:repres_element},
where $\Mon(\ringdiffop)$ is replaced by $\Mon(\ringdiffop^{1 \times q})$.
By generalizing the notion of monomial ordering defined in Sect.~\ref{section_Ore} to total orderings on $\Mon(\ringdiffop^{1 \times q})$,
one can extend the reduction procedure described in Remark~\ref{rem:reduction} and indeed
any algorithm computing Gr\"obner or Janet bases for
left ideals of $\ringdiffop$ to one that computes
Gr\"obner or Janet bases for submodules $\ringdiffop^{1 \times p} A$ of $\ringdiffop^{1 \times q}$, where $A \in \ringdiffop^{p \times q}$.
In particular, membership to such a submodule can be decided by reduction,
and therefore, computations with residue classes in $\ringdiffop^{1 \times q} / \ringdiffop^{1 \times p} A$
can be performed effectively.

\medskip

We recall some relevant monomial orderings on $\ringdiffop^{1 \times q}$.

\begin{example}\label{ex:TOPandPOT}
A monomial ordering $\prec$ on $\ringdiffop$ can be extended to
monomial orderings $<$ on $\ringdiffop^{1 \times q}$
in different ways, for example, by defining
\[
m_1 \, e_k \, < \,
m_2 \, e_l \quad \iff \quad
\Big( \, m_1 \prec m_2 \quad \mbox{or} \quad \big( \, m_1 = m_2 \quad \mbox{and} \quad
k > l \, \big) \, \Big)
\]
%where $m_1$, $m_2 \in \Mon(\ringdiffop)$ and $k$, $l \in \{ 1, \ldots, q \}$,
(``term-over-position''), or by defining
\[
m_1 \, e_k \, < \,
m_2 \, e_l \qquad \iff \qquad
\Big( \, k > l \quad \mbox{or} \quad \big( \, k = l \quad \mbox{and} \quad m_1 \prec m_2 \, \big) \, \Big)
\]
(``position-over-term''),
where $m_1$, $m_2 \in \Mon(\ringdiffop)$ and $k$, $l \in \{ 1, \ldots, q \}$.
\end{example}

\begin{example}\label{ex:elimtuples}
Let $s \in \{ 1, \ldots, q-1 \}$ and
$\prec_1$, $\prec_2$ be monomial orderings on $\ringdiffop^{1 \times s}$ and $\ringdiffop^{1 \times (q-s)}$, %respectively,
with standard bases $e_1$, \ldots, $e_s$ and $e_{s+1}$, \ldots, $e_q$, respectively.
A \emph{monomial ordering $<$ on $\ringdiffop^{1 \times q}$ eliminating $e_1$, \ldots, $e_s$}
is defined by
\[
\begin{array}{rcl}
m_1 \, e_k \, < \,
m_2 \, e_l & \iff &
\Big( \, l \le s < k \quad \, \mbox{or}\\[1em]
& & \quad \big( \, k \le s \quad \mbox{and} \quad l \le s \quad \mbox{and} \quad
m_1 \, e_k \, \prec_1 \, m_2 \, e_l
\, \big) \quad \, \mbox{or}\\[1em]
& & \quad \big( \, k > s \quad \mbox{and} \quad l > s \quad \mbox{and} \quad
m_1 \, e_k \, \prec_2 \, m_2 \, e_l
\, \big) \, \Big)\,,
\end{array}
\]
where $m_1$, $m_2 \in \Mon(\ringdiffop)$ and $k$, $l \in \{ 1, \ldots, q \}$.
\end{example}

\begin{remark}\label{rem:kernel}
Let $\varphi\colon \ringdiffop^{1 \times a} \to \ringdiffop^{1 \times b}$
be a homomorphism of left $\ringdiffop$-modules, represented by a matrix $\matrixrepres \in \ringdiffop^{a \times b}$.
A Janet basis for the nullspace of $\varphi$ can be computed as follows.
Join the two standard bases of $\ringdiffop^{1 \times a}$ and $\ringdiffop^{1 \times b}$ to obtain the basis $e_1$, \ldots, $e_a$, $e_{a+1}$, \ldots, $e_{a+b}$
of $\ringdiffop^{1 \times a} \oplus \ringdiffop^{1 \times b} \cong \ringdiffop^{1 \times (a+b)}$.
Let $<$ be a monomial ordering on $\ringdiffop^{1 \times (a+b)}$ as defined
in Example~\ref{ex:elimtuples} for $q = a+b$, $s = a$ and certain $\prec_1$ and $\prec_2$, i.e.,
eliminating $e_1$, \ldots, $e_a$.
Then let $J_0$ be a Janet basis, with respect to $<$,
for the submodule of $\ringdiffop^{1 \times (a+b)}$ generated
by the rows of the matrix $(\matrixrepres \quad I_a) \in \ringdiffop^{a \times (b+a)}$, where
$I_a$ is the identity matrix.
Now $J := \{ \, w \in \ringdiffop^{1 \times a} \mid (0, w) \in J_0 \, \}$ is a
a Janet basis for the nullspace of $\varphi$ with respect to $\prec_2$ (cf.\ also
\cite[Ex.~3.10]{RobRecentProgress},
\cite[Ex.~3.1.27]{Robertz6}).
\end{remark}

\begin{remark}\label{rem:involution}
An involution $\theta: \ringdiffop \to \ringdiffop$ of $\ringdiffop$ allows to reduce computations with right $\ringdiffop$-modules to computations with left $\ringdiffop$-modules.
More precisely, if we have
$\theta(r_1 + r_2) = \theta(r_1) + \theta(r_2)$
and $\theta(r_1 \, r_2) = \theta(r_2) \, \theta(r_1)$
and $\theta(\theta(r)) = r$ for all $r_1$, $r_2$, $r \in \ringdiffop$, then
any right $\ringdiffop$-module $M$ is turned into a left $\ringdiffop$-module $\widetilde{M} := M$ (as abelian groups) via $r \, m := m \, \theta(r)$,
where $r \in \ringdiffop$, $m \in \widetilde{M}$,
and vice versa.
The involution $\theta$ is extended to matrices by
(cf.\ also \cite[Rem.~3.11]{RobRecentProgress})
\[
\theta(A) := (\theta((A^{tr})_{i,j}))_{1 \le i \le q, 1 \le j \le p} \in \ringdiffop^{q \times p},
\qquad \quad A \in \ringdiffop^{p \times q}.
\]
Since for $A \in \ringdiffop^{p \times q}$,
$B \in \ringdiffop^{q \times r}$ we have
$A \, B = 0$ if and only if $\theta(B) \, \theta(A) = 0$, the computation of nullspaces of homomorphisms of right $\ringdiffop$-modules is reduced to the situation described in Remark~\ref{rem:kernel}.
For %the ring
$\ringdiffop$
%of differential operators
introduced in Definition~\ref{de:R} we choose
\[
\theta: \ringdiffop \to \ringdiffop, \quad
\theta|_{\diffalg} := {\rm id}_{\diffalg}, \quad
\theta(\diffindet_j) := -\diffindet_j, \quad
j = 1, \ldots, d.
\]
\end{remark}

\section{Parametrizing boundary conditions}\label{section_boundary}

This section constructs parametrizations of functions satisfying certain boundary conditions, independent of the parametrization of differential equations.

We restrict ourselves to boundary conditions parametrized by analytic functions for two reasons.
First, this allows algebraic algorithms.
Second, due to the limiting behaviour of \glspl!{gp} when conditioning on more data points, closed sets of functions are preferable, see Theorem~\ref{thm_analytical_spectral}.
For approximate resp.\ asymptotic resp.\ partially unknown boundary conditions for \glspl!{gp} see \cite{solin2019know} resp.\ \cite{tan2018gaussian} resp.\ \cite{gulian2020gaussian}.
For a theoretic approach to endow RKHS with boundary information see \cite{nicholson2022kernel}.

Let again $\F=C^\infty(\domain,\R)$ with Fr\'echet topology from \eqref{eq_frechet} be the set of smooth functions on $\domain\subset\R^d$ compact, $\diffalg = \R\{ f_1, \ldots, f_r \}$ with analytic functions $f_i: \domain\to\R$, and let $\ringdiffop\supseteq\diffalg$ be the Ore extension of $\diffalg$.

This section is based on two theorems.
The first one describes closed modules satisfying a Nullstellensatz via  their Taylor expansion.
Denote by $T_p$ the Taylor series of a (vector or matrix of) smooth function(s) around a point $p\in D$.

\begin{theorem}[Whitney's Spectral Theorem; {\cite{whitney1948ideals}, \cite[V Theorem~1.3]{tougeron1972ideaux}}]\label{thm_whitney_spectral}
    An $\F$-module $M\le C^\infty(D,\R)^\ell$ has topological closure $\overline{M}=\bigcap_{p\in D} T^{-1}_p(T_p(M))$.
\end{theorem}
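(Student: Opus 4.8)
The plan is to prove the two inclusions of the asserted set equality separately, the first being essentially formal and the second being the genuine content of Whitney's theorem. For the easy inclusion $\overline{M}\subseteq\bigcap_{p\in D} T_p^{-1}(T_p(M))$, I would first note that each Taylor map $T_p\colon C^\infty(\domain,\R)^\ell\to\R[[x-p]]^\ell$ is continuous for the Fr\'echet topology \eqref{eq_frechet}, since convergence in all seminorms $\|\cdot\|_a$ forces coefficientwise convergence of Taylor series. It then suffices to show that each $T_p(M)$ is closed in $\R[[x-p]]^\ell$: then $T_p^{-1}(T_p(M))$ is closed, contains $M$, hence contains $\overline{M}$, and the intersection of these closed sets still contains $\overline{M}$. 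Closedness of $T_p(M)$ follows because, by Borel's theorem $T_p(\F)=\R[[x-p]]$, so $T_p(M)$ is an $\R[[x-p]]$-submodule of the free module $\R[[x-p]]^\ell$ over the complete Noetherian local ring $\R[[x-p]]$ with maximal ideal $\mathfrak m$; by the Krull intersection theorem (via Artin--Rees) every such submodule is closed in the $\mathfrak m$-adic topology, which coincides with the coefficientwise topology.

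For the hard inclusion, I would take $f$ with $T_p(f)\in T_p(M)$ for every $p\in D$ and show $f\in\overline{M}$, i.e.\ approximate $f$ in each seminorm $\|\cdot\|_a$ by elements of $M$. The first reduction is to a finitely generated submodule: since each $T_p(M)$ is finitely generated (Noetherianity of $\R[[x-p]]$) and $D$ is compact, finitely many elements of $M$ suffice to generate all the local Taylor modules, so one may replace $M$ by a finitely generated $M'=\sum_j \F\,g_j$ with $T_p(M')=T_p(M)$ for all $p$ and $\overline{M'}\subseteq\overline{M}$. I would then localize: cover the compact $D$ by finitely many neighborhoods $U_i$ on which the pointwise condition supplies a local representative $m_i\in M'$ whose difference $f-m_i$ is flat at the base points, patch by a partition of unity $\{\rho_i\}$ subordinate to the $U_i$ to form $m:=\sum_i \rho_i m_i$, and estimate $\|f-m\|_a=\|\sum_i \rho_i (f-m_i)\|_a$. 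Here $m\in M'$ because $\rho_i\in\F$ and $M'$ is an $\F$-module, and $f-m=\sum_i\rho_i(f-m_i)$ since $\sum_i\rho_i=1$ on $D$.

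The hard part will be the uniform $C^\infty$ control, which is exactly where Whitney's spectral theorem concentrates its difficulty. Matching Taylor series at a single base point is easy, but to make $\|\rho_i(f-m_i)\|_a$ small throughout $\operatorname{supp}\rho_i$ one must effectively solve the division $f=\sum_j h_j g_j$ with a remainder that is flat on a whole neighborhood, while bounding all derivatives up to order $a$. Near the common zeros of the $g_j$ this division loses derivatives, and controlling that loss requires \L ojasiewicz-type inequalities for the generators together with the Whitney extension theorem to realize the compatible jets as genuine smooth functions with quantitative estimates. This uniform estimate is the true obstacle, and rather than reproving it I would invoke \cite{whitney1948ideals,tougeron1972ideaux} for this step.
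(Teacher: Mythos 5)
The paper does not prove this statement: it is imported verbatim from Whitney \cite{whitney1948ideals} and Tougeron \cite[V Theorem~1.3]{tougeron1972ideaux}, so there is no in-paper argument to compare yours against. Your sketch follows the classical route, and you correctly identify that the uniform division estimates (\L ojasiewicz inequalities, Whitney extension) are the real content; but since you defer exactly that step to the same references, what your proposal adds beyond the paper's citation is essentially only the easy inclusion, and that part contains a step that is wrong as stated. The $\mathfrak{m}$-adic (Krull) topology on $\R[[x-p]]^\ell$ does \emph{not} coincide with the coefficientwise topology (the constant sequence of series $1/n$ converges to $0$ coefficientwise but not $\mathfrak{m}$-adically), so Artin--Rees closedness does not directly give closedness of $T_p(M)$ in the topology in which $T_p$ is continuous. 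The repair is standard but must be said: if $T_p(f_n)\to T_p(f)$ coefficientwise with $T_p(f_n)\in T_p(M)$, then for each $k$ the truncations modulo $\mathfrak{m}^k$ converge in a finite-dimensional $\R$-vector space to an element of the (linear, hence closed) image of $T_p(M)$ there, so $T_p(f)\in T_p(M)+\mathfrak{m}^k\R[[x-p]]^\ell$ for every $k$, and only then does Krull's intersection theorem yield $T_p(f)\in T_p(M)$. Your observation that $T_p(M)$ is an $\R[[x-p]]$-submodule via Borel's theorem is correct and is needed here.

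In the hard direction there are two further soft spots. First, the reduction to a finitely generated $M'$ with $T_p(M')=T_p(M)$ for \emph{all} $p$ does not follow from Noetherianity at each point plus compactness of $\domain$: local generators of $T_p(M)$ at one point need not generate $T_q(M)$ at nearby $q$, and no coherence is available for a general $C^\infty$-module, so this reduction is itself part of what the cited sources establish. Second, flatness of $f-m_i$ at a base point does not make $\|\rho_i(f-m_i)\|_a$ small on all of $\operatorname{supp}\rho_i$, so the partition-of-unity patching as written does not yet produce an approximating element of $M$; the shrinking-support bookkeeping is inseparable from the quantitative division you postpone. None of this makes the plan unworkable, but it leaves the proposal as an annotated outline of the classical proof rather than a proof; for the purposes of this paper, citing the theorem, as the authors do, is the appropriate resolution.
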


The second theorem specifies that analytic functions generate closed modules.

\begin{theorem}[{\cite[Theorem 4]{malgrange1959division}, \cite[VI Theorem~1.1]{tougeron1972ideaux}}]\label{thm_analytical_spectral}
    Let  $C$ be an $m\times n$-matrix of analytic functions on $D\subset \R^d$ and $\phi\in \left(C^\infty(D,\R)\right)^m$.
    Then there is a $\psi\in \left(C^\infty(D,\R)\right)^n$ with $\phi=C\cdot \psi$ if and only if for all $p\in D$ the $T_p(\phi)$ are an $\R[[x_1-p_1,\ldots,x_d-p_d]]$-linear combination of the columns of $T(C)$.    
\end{theorem}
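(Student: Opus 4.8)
The plan is to pass from the division problem to a closedness statement about a single module. Set $M := C \cdot (C^\infty(D,\R))^n \subseteq (C^\infty(D,\R))^m$, the image of multiplication by $C$, which is an $\F$-submodule. The claim is then that $\phi = C\psi$ is solvable in smooth functions exactly when $\phi$ lies in $M$, and the pointwise Taylor hypothesis will turn out to describe precisely the closure $\overline{M}$. Necessity is immediate and requires no analyticity: since $T_p$ is a ring homomorphism from germs of smooth functions at $p$ into $\R[[x_1-p_1,\ldots,x_d-p_d]]$, a smooth solution $\psi$ gives $T_p(\phi) = T_p(C)\,T_p(\psi)$, so $T_p(\phi)$ is a formal-power-series combination of the columns of $T_p(C)$ at every $p\in D$.

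For sufficiency I would first compute $T_p(M)$. Using the homomorphism property together with Borel's theorem -- every formal power series at $p$ is the Taylor series of some smooth function -- the image $T_p(\psi)$ runs over all of $\R[[x_1-p_1,\ldots,x_d-p_d]]^n$ as $\psi$ ranges over $(C^\infty(D,\R))^n$. Hence $T_p(M) = T_p(C)\cdot \R[[x_1-p_1,\ldots,x_d-p_d]]^n$ is exactly the formal-power-series span of the columns of $T_p(C)$. Whitney's Spectral Theorem (Theorem~\ref{thm_whitney_spectral}) then yields $\overline{M} = \bigcap_{p\in D} T_p^{-1}(T_p(M))$, so the pointwise Taylor condition on $\phi$ says precisely that $\phi \in \overline{M}$. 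The entire theorem thereby reduces to the single assertion $M = \overline{M}$, i.e.\ that the image of an \emph{analytic} matrix is topologically closed in the Fréchet space of smooth functions.

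The hard part is exactly this closedness, and it is the only place where the analyticity of $C$ is essential (the statement is false for merely smooth $C$). Here I would follow the Malgrange--Tougeron strategy of localizing and patching with uniform quantitative control. At each point one realizes a formal preimage via Borel to obtain a local smooth $\psi_p$ for which $\phi - C\psi_p$ is flat at $p$; the analytic input needed to upgrade such pointwise/flat solvability to a genuine smooth solution is a Łojasiewicz inequality for the relevant analytic minors of $C$ (bounding the distance to the rank-dropping locus by the size of those minors), together with Weierstrass/Malgrange preparation to bring $C$ into a locally divisible normal form. One then glues the local solutions by a partition of unity and eliminates the remaining flat error by an Artin--Rees/Noetherianity argument over the analytic germ rings, the key geometric fact being that a smooth section flat on the degeneracy set of $C$ and satisfying the pointwise conditions already lies in the image. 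Securing this uniform divisibility across $D$ is the technical heart; once $M = \overline{M}$ is established, $\phi \in \overline{M} = M$ immediately produces the desired smooth $\psi$ with $\phi = C\psi$. \qed
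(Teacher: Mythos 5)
The paper does not prove this statement at all: it is imported verbatim as Malgrange's division theorem (with the references to Malgrange and Tougeron serving as the proof), so there is no internal argument to compare yours against. Measured on its own terms, your necessity direction is correct and complete ($T_p$ is an $\R$-algebra homomorphism, so $\phi=C\psi$ gives $T_p(\phi)=T_p(C)T_p(\psi)$), and your reduction of sufficiency is also correct: by Borel's theorem $T_p\bigl(C\cdot\F^n\bigr)=T_p(C)\cdot\R[[x_1-p_1,\ldots,x_d-p_d]]^n$, so by Whitney's Spectral Theorem the pointwise Taylor hypothesis is exactly the statement $\phi\in\overline{M}$ for $M:=C\cdot\F^n$, and the theorem becomes the closedness $M=\overline{M}$. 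This is precisely the form in which the paper actually uses the result (cf.\ the proof of Proposition~\ref{proposition_boundary}, which invokes Theorem~\ref{thm_analytical_spectral} only to conclude $I=\overline{I}$), and your observation that analyticity of $C$ is indispensable here is right --- a flat factor such as $\exp(-1/x^2)$ gives a non-closed principal ideal.

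The genuine gap is that this closedness \emph{is} the theorem, and your treatment of it is a list of ingredients rather than a proof. Invoking \L{}ojasiewicz inequalities, Weierstrass--Malgrange preparation, partitions of unity, and an Artin--Rees argument names the right toolbox, but none of the actual steps is carried out, and your ``key geometric fact'' (that a smooth section satisfying the pointwise conditions and flat on the degeneracy locus of $C$ already lies in the image) is itself a deep assertion that subsumes most of the difficulty --- for instance, it silently requires the flatness of the ring of smooth germs over the ring of analytic germs, which is Malgrange's hardest lemma. So the proposal is a correct and well-organized reduction plus an accurate pointer to where the real work lives, but it does not constitute a proof of the sufficiency direction; for the purposes of this paper that is acceptable only because the paper, too, delegates that work to the cited sources.
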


\subsection{Boundary conditions for function values of single functions}

We begin parametrizing functions which are zero on an analytic set $M$, e.g.\ Dirichlet boundary conditions which prescribe values at the boundary $\partial D$.

We define boundaries $M\subseteq D$ implicitly via
\begin{align*}
    M=\V(I):=\left\{m\in D\,\middle|\, b(m)=0\mbox{ for all }b\in I\right\}\subseteq D\mbox{,}
\end{align*}
where $I\trianglelefteq \diffalg$ is an ideal of equations.
For any analytic set $M\subseteq D$ we have $M=\V(\I(M))$, where $\I(M)=\{b\in\F^\ell\mid b(m)=0\mbox{ for all }m\in M\}\subseteq \F$ is the (closed and radical) ideal of functions vanishing at $M$.
If $I$ is radical (it is automatically closed by Theorem~\ref{thm_analytical_spectral}, as generated by analytic functions), then $\I(\V(I))=I$.
Hence, any set of analytic function defined on $\domain$ which generates a radical ideal parametrizes functions vanishing at its zero set.
More formally:

\begin{proposition}\label{proposition_boundary}
	Let $B'\in K^{1\times\ell}$ be a row of analytic functions whose entries generate a radical $\F$-ideal $I=B'\F^{\ell}\le\F$ of smooth functions.
	Then, $I$ is the set $\left\{f\in\F \mid f_{|\V(I)}=0\right\}$ of smooth functions vanishing at $\V(I)$.
\end{proposition}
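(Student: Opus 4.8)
The plan is to prove the two inclusions in the claimed equality $I = \{f \in \F \mid f_{|\V(I)} = 0\}$ separately, the inclusion ``$\subseteq$'' being immediate and ``$\supseteq$'' requiring the two spectral theorems. For ``$\subseteq$'': each entry $b_i$ of $B'$ vanishes on $\V(I)$ by the very definition of $\V$, and hence so does every $\F$-linear combination $f = \sum_i b_i g_i \in I = B'\F^\ell$.

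For ``$\supseteq$'', I would exploit that the generators are analytic. Applying Theorem~\ref{thm_analytical_spectral} with $C = B'$ (so $m = 1$, $n = \ell$) turns membership in $I$ into a pointwise condition on Taylor series: a smooth $f$ lies in $I$ if and only if for every $p \in D$ the series $T_p(f)$ is an $\R[[x_1 - p_1, \ldots, x_d - p_d]]$-linear combination of $T_p(b_1), \ldots, T_p(b_\ell)$, that is, $T_p(f) \in T_p(I)$, where $T_p(I) := (T_p(b_1), \ldots, T_p(b_\ell)) \trianglelefteq \R[[x - p]]$. This simultaneously re-proves that $I$ is closed, consistent with Whitney's Theorem~\ref{thm_whitney_spectral}, since it exhibits $I = \bigcap_{p \in D} T_p^{-1}(T_p(I))$. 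It therefore remains to show that $f_{|\V(I)} = 0$ implies $T_p(f) \in T_p(I)$ for all $p \in D$.

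I would distinguish two cases. If $p \notin \V(I)$, then some $b_i(p) \neq 0$, so $T_p(b_i)$ has nonzero constant term, is a unit in $\R[[x - p]]$, and consequently $T_p(I) = \R[[x - p]] \ni T_p(f)$ automatically. The hard case is $p \in \V(I)$, where I must promote the mere set-theoretic vanishing of $f$ on $\V(I)$ to the formal membership $T_p(f) \in T_p(I)$. This is exactly the point where radicality of $I$ is indispensable: for a merely closed ideal such as $(x^2 + y^2)\F$ the vanishing ideal $\{f \mid f_{|\V(I)} = 0\}$ is strictly larger than $I$. Concretely, I would pass to the ring of real-analytic germs at $p$, in which $\V(I)$ is cut out by $b_1, \ldots, b_\ell$, and invoke a real-analytic Nullstellensatz, so that radicality forces the germ ideal generated by the $b_i$ to coincide with the ideal of germs vanishing on $\V(I)$; completing to $\R[[x - p]]$ then yields $T_p(f) \in T_p(I)$.

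Collecting the two cases gives $T_p(f) \in T_p(I)$ for every $p \in D$, whence $f \in I$ by the Taylor criterion of Theorem~\ref{thm_analytical_spectral}; this is the inclusion ``$\supseteq$'' and finishes the proof. I expect the local step at the boundary points $p \in \V(I)$ to be the main obstacle --- converting vanishing on the zero set into ideal membership of Taylor series --- since this is the only place where the radical hypothesis genuinely enters, together with the analyticity of the $b_i$ through the Nullstellensatz for germs.
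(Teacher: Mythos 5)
Your overall route coincides with the paper's: both arguments rest on the same two inputs, namely Theorem~\ref{thm_analytical_spectral} (membership of a smooth $f$ in $I=B'\F^{\ell}$ is equivalent to the pointwise condition $T_p(f)\in T_p(I)$, and in particular $I$ is closed) and Theorem~\ref{thm_whitney_spectral}, and both reduce the whole proposition to showing that $f_{|\V(I)}=0$ forces $T_p(f)\in T_p(I)$ for every $p$. Your ``$\subseteq$'' inclusion and your treatment of the points $p\notin\V(I)$ are correct, and you have located the crux exactly where it lies, at the points $p\in\V(I)$ --- a step the paper's own proof compresses into the single assertion that vanishing on $\V(I)$ ``restricts the zeroth order Taylor coefficients by homogeneous equations.''

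The mechanism you propose for that crux, however, does not close it. The real-analytic Nullstellensatz (Risler) identifies the ideal of analytic germs vanishing on a germ of an analytic set with the \emph{real} radical of the defining ideal, not with its ordinary radical, and ordinary radicality of the $\F$-ideal $I$ does not imply that the germ ideal generated by $b_1,\ldots,b_\ell$ at $p$ is the full vanishing ideal there. Your own illustrative example turns against you: $I=(x^2+y^2)\F$ in $C^\infty(\R^2)$ is not ``merely closed'' --- since $x^2+y^2$ is irreducible, hence prime, in $\R[[x,y]]$, Theorem~\ref{thm_analytical_spectral} shows that $f^n\in I$ forces $T_0(f)\in(x^2+y^2)\R[[x,y]]$ and hence $f\in I$, so this $I$ is a radical (indeed prime) $\F$-ideal; yet $\V(I)=\{0\}$ and $x$ vanishes there without lying in $I$. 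So radicality alone cannot yield the germ-level identity you invoke; one needs a real-radical or \L{}ojasiewicz-type hypothesis at the points of $\V(I)$. A second, smaller debt remains even granting that identity: for a merely smooth $f$, passing from set-theoretic vanishing on $\V(I)$ to membership of the formal series $T_p(f)$ in the completed vanishing ideal is itself a nontrivial Whitney--\L{}ojasiewicz-type statement, not a formality. To be fair, the paper's one-sentence treatment of exactly this step carries the same lacuna; the merit of your write-up is that it makes the missing step visible, but as proposed it does not supply it.
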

\begin{proof}
    The condition $f_{|\V(I)}=0$ restricts the zeroth order Taylor coefficients by homogeneous equations.
    All functions satisfying such restrictions are contained in the closure $\overline{I}$ of $I$ by Whitney's Spectral Theorem~\ref{thm_whitney_spectral}.
    %We parametrize all functions satisfying such boundary conditions by real analytic functions contained in $K$.
    The $\F$-module parametrization $I=B'\F^{\ell}$ uses analytic functions as generators, which ensures that the ideal $I$ is already equal to its closure $\overline{I}$ by Theorem~\ref{thm_analytical_spectral}.\qed
\end{proof}

We now compare constructions of rows $B'$ of functions in Proposition~\ref{proposition_boundary}.

\begin{example}\label{example_dirichlet}
	Functions $\F=C^\infty([0,1]^d,\R)$ with Dirichlet boundary conditions $f(\partial D)=0$ at the boundary of the domain $D=[0,1]^d$ are parametrized by
	\begin{align}\label{example_dirichlet_1}
	    B_1'=\begin{bmatrix}
	        \prod_{i=1}^dx_i(x_i-1)
	    \end{bmatrix}
	\end{align}
	over $K=\R\{x_1,\ldots,x_n\}=\R[x_1,\ldots,x_n]$, by
	\begin{align}\label{example_dirichlet_2}
	    B_2'=\begin{bmatrix}
	        1-\exp\left((-1)^{d+1}\cdot\frac{\prod_{i=1}^dx_i(x_i-1)}{\delta}\right)
	    \end{bmatrix}
	\end{align}
	over $K=\R\{\exp(x_1^2),\exp(x_1),x_1,\ldots,\exp(x_d^2),\exp(x_d),x_d\}$, or by\footnote{$B_3'$ is obtained as the product of the standard deviations obtained by conditioning $d$ one-dimensional squared exponential covariances to the data points $(0,0)$ and $(1,0)$.}
    \begin{align}\label{example_dirichlet_3}
    	B_3'=\begin{bmatrix}
            \sqrt{\prod_{i=1}^d\left(1+\frac{\exp\left(-\frac{x_i^2}{\delta}\right)-2\exp\left(-\frac{x_i^2-x_i+1}{\delta}\right)+\exp\left(-\frac{(x_i-1)^2}{\delta}\right)}{\exp\left(-\frac{1}{\delta}\right)-1}\right)}
        \end{bmatrix}
    \end{align}
    for any $\delta>0$.
    See {\cite[Section~3]{NoisyLinearOperatorEquationsGP}} for the special case $d=2$ in \eqref{example_dirichlet_1}.
    For practical differences of these formalizations of boundary conditions see Remark~\ref{remark_comparison_boundaries}.
\end{example}

Block diagonal matrices parametrize boundaries of a vector of $\ell>1$ functions.
Also, restrictions on sets with higher codimension can be defined.
%These are no longer boundaries of a domain.

\begin{example}
	\sloppy
	The following three matrices 
	%\begin{align*}
	%$B'=\begin{bmatrix} x & y \end{bmatrix}$,
	%\end{align*}
	%\begin{align*}
	$\begin{bmatrix} 1-\exp\left(-\frac{|x|}{\delta}\right) & 1-\exp\left(-\frac{|y|}{\delta}\right) \end{bmatrix}$,
	%\end{align*}
	%\begin{align*}
	$\begin{bmatrix} 1-\exp\left(-\frac{\sqrt{x^2+y^2}}{\delta}\right) \end{bmatrix}$,
	%\end{align*}
	and
	%\begin{align*}
	$\begin{bmatrix} 1-\exp\left(-\frac{x^2+y^2}{\delta}\right) \end{bmatrix}$
	%\end{align*}
	%parametrize $\{f\in\F=C^\infty(\R^3,\R)|f(0,0,z)=0\}$.
	parametrize functions $f\in\F=C^\infty(\R^3,\R)$ with $f(0,0,z)=0$.
	The last parametrization is analytic.
\end{example}

\subsection{Boundary conditions for derivatives and vectors}

Boundary conditions with vanishing derivatives can be constructed using multiplicities in the (no longer radical) ideal.
%In particular, we no longer assume the ideal $I$ with $M=\V(I)$ to be radical, and instead switch to a scheme-theoretic point of view.
The proof of the following proposition again follows from Theorems~\ref{thm_whitney_spectral} and \ref{thm_analytical_spectral}, in a similar way to Proposition~\ref{proposition_boundary}.

\begin{proposition}\label{proposition_boundary_derivatives}
	Let $B'\in K^{\ell\times\ell'}$ be a matrix of analytic functions whose columns generate an $\F$-module $M=B'\F^{\ell'}\le \F^\ell$ of smooth functions.
    Then, 
    \begin{align*}
        \textstyle \left\{f\in\F^\ell\,\middle|\,\forall p\in D\ \exists a_i\in\R[[x_1-p_i,\ldots,x_d-p_d]]\ \forall 1\le i\le\ell: T_p(f_i)=\sum_{j=1}^{\ell'} T_p(b_{ij})a_j \right\}
    \end{align*}
    is the closed set of smooth functions sharing the same vanishing lower order Taylor coefficients as the columns of $B'$.
\end{proposition}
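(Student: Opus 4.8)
Looking at this proposition, it closely parallels Proposition~\ref{proposition_boundary}, with the key difference that we now work with a matrix (not just a row) of analytic functions, and we characterize a module of vector-valued functions matching specified Taylor coefficients rather than merely the zeroth-order vanishing condition. The statement asserts that the set $S$ defined by the pointwise Taylor-matching condition equals the closure of the module $M=B'\F^{\ell'}$.

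\begin{proof}
The plan is to identify the set $S$ described in the statement with the closure $\overline{M}$ of the module $M=B'\F^{\ell'}$, and then to argue that analyticity of $B'$ forces $M=\overline{M}$, exactly as in the proof of Proposition~\ref{proposition_boundary}.

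First I would unpack the defining condition of $S$. For a fixed point $p\in D$, the condition $T_p(f_i)=\sum_{j=1}^{\ell'}T_p(b_{ij})\,a_j$ with formal power series coefficients $a_j\in\R[[x_1-p_1,\ldots,x_d-p_d]]$ says precisely that the Taylor series vector $T_p(f)$ lies in the $\R[[x_1-p_1,\ldots,x_d-p_d]]$-span of the columns of $T_p(B')$. In the notation of Whitney's Spectral Theorem~\ref{thm_whitney_spectral}, this is exactly the statement $f\in T_p^{-1}(T_p(M))$, since $T_p(M)$ is the module over the formal power series ring generated by the columns of $T_p(B')$ (as $T_p$ is a ring homomorphism on smooth functions and carries the $\F$-module structure to the power-series module structure). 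Intersecting over all $p\in D$, the set $S$ therefore coincides with $\bigcap_{p\in D}T_p^{-1}(T_p(M))$, which by Theorem~\ref{thm_whitney_spectral} is precisely $\overline{M}$.

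It then remains to show that $M$ is already closed, i.e.\ $M=\overline{M}$, so that $S=\overline{M}=M$ and the set $S$ is genuinely the module generated by the columns of $B'$. This is where I would invoke Theorem~\ref{thm_analytical_spectral}: because the entries of $B'$ are analytic, a smooth $\phi\in\F^\ell$ lies in the image $B'\F^{\ell'}=M$ if and only if at every $p\in D$ the Taylor vector $T_p(\phi)$ is an $\R[[x_1-p_1,\ldots,x_d-p_d]]$-linear combination of the columns of $T_p(B')$ — but that is exactly membership in $\overline{M}$ as just computed. Hence $\overline{M}\subseteq M$, and since the reverse inclusion is trivial, $M$ is closed.

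The main obstacle, and the only point requiring genuine care, is the matrix-valued bookkeeping in identifying $T_p(M)$ with the formal power-series module spanned by the columns of $T_p(B')$: one must check that applying $T_p$ commutes with taking $\F$-linear combinations and that the resulting coefficients range over all of $\R[[x_1-p_1,\ldots,x_d-p_d]]$, not merely over Taylor series of smooth functions. This is handled by the interpretation built into Theorems~\ref{thm_whitney_spectral} and \ref{thm_analytical_spectral}, whose hypotheses on $D$ and on analytic generators are exactly tailored to this situation; consequently the argument is a direct, if slightly more notation-heavy, analogue of the proof of Proposition~\ref{proposition_boundary}.\qed
\end{proof}
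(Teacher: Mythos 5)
Your proof is correct and takes essentially the same route as the paper, which simply remarks that the proposition ``again follows from Theorems~\ref{thm_whitney_spectral} and \ref{thm_analytical_spectral}, in a similar way to Proposition~\ref{proposition_boundary}.'' In fact your write-up supplies more detail than the paper does, in particular the identification of the defining Taylor condition with membership in $\bigcap_{p\in D}T_p^{-1}(T_p(M))$ via Whitney's Spectral Theorem and the observation that analyticity of $B'$ forces $M=\overline{M}$ by Theorem~\ref{thm_analytical_spectral}.
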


\begin{figure}[t]
	\centering
	%
%  \tikzexternalenable
%  \tikzsetnextfilename{#1}%
  \newcommand{\scaleboundary}{0.1}

  \begin{tikzpicture}[scale=0.35]
    \begin{axis}[ymin=-3,ymax=3,xmin=-0.01,xmax=1.01]
    % upper
    \addplot[name path=upper,domain=-0.001:1.001,samples=50,blue] {2};
    % lower
    \addplot[name path=lower,domain=-0.001:1.001,samples=50,blue] {-2};
    %area
    \addplot [color=blue,fill=blue,fill opacity=0.05] fill between[of=lower and upper, soft clip={domain=-0.001:1.001} ];
    % mean
    \addplot[name path=mean,domain=-0.001:1.001,blue]{0};
    \end{axis}
  \end{tikzpicture}
  \begin{tikzpicture}[scale=0.35]
    \begin{axis}[ymin=-3,ymax=3,xmin=-0.01,xmax=1.01]
    % upper
    \addplot[name path=upper,domain=-0.001:1.001,samples=50,blue] {2*sqrt(x^2*(x-1)^2)};
    % lower
    \addplot[name path=lower,domain=-0.001:1.001,samples=50,blue] {-2*sqrt(x^2*(x-1)^2)};
    %area
    \addplot [color=blue,fill=blue,fill opacity=0.05] fill between[of=lower and upper, soft clip={domain=-0.001:1.001} ];
    % mean
    \addplot[name path=mean,domain=-0.001:1.001,blue]{0};
    \end{axis}
  \end{tikzpicture}
  \begin{tikzpicture}[scale=0.35]
    \begin{axis}[ymin=-3,ymax=3,xmin=-0.01,xmax=1.01]
    % upper
    \addplot[name path=upper,domain=-0.001:1.001,samples=150,blue] {2*(-1+exp(100*x*(x-1)))*(-1+exp(100*x*(x-1)))};
    % lower
    \addplot[name path=lower,domain=-0.001:1.001,samples=150,blue] {-2*(-1+exp(100*x*(x-1)))*(-1+exp(100*x*(x-1)))};
    %area
    \addplot [color=blue,fill=blue,fill opacity=0.05] fill between[of=lower and upper, soft clip={domain=-0.001:1.001} ];
    % mean
    \addplot[name path=mean,domain=-0.001:1.001,blue]{0};
    \end{axis}
  \end{tikzpicture}
  \begin{tikzpicture}[scale=0.35]
    \begin{axis}[ymin=-3,ymax=3,xmin=-0.01,xmax=1.01]
    % upper
    \addplot[name path=upper,domain=-0.001:1.001,samples=150,blue] {2*sqrt((-2*exp((-x^2+x-1)/(1/10)^2)+exp(-x^2/(1/10)^2)+exp(-(x-1)^2/(1/10)^2)+exp(-1/(1/10)^2)-1)/(exp(-1/(1/10)^2)-1))};
    % lower
    \addplot[name path=lower,domain=-0.001:1.001,samples=150,blue] {-2*sqrt((-2*exp((-x^2+x-1)/(1/10)^2)+exp(-x^2/(1/10)^2)+exp(-(x-1)^2/(1/10)^2)+exp(-1/(1/10)^2)-1)/(exp(-1/(1/10)^2)-1))};
    %area
    \addplot [color=blue,fill=blue,fill opacity=0.05] fill between[of=lower and upper, soft clip={domain=-0.001:1.001} ];
    % mean
    \addplot[name path=mean,domain=-0.001:1.001,blue]{0};
    \end{axis}
  \end{tikzpicture}
  \begin{tikzpicture}[scale=0.35]
    \begin{axis}[ymin=-3,ymax=3,xmin=-0.01,xmax=1.01]
    % upper
    \addplot[name path=upper,domain=-0.001:1.001,samples=150,blue] {2*((-2*exp(10^2*(-x^2+x-1))+exp(-10^2*x^2)+exp(-10^2*(x-1)^2)+exp(-10^2)-1)/(exp(-10^2)-1))};
    % lower
    \addplot[name path=lower,domain=-0.001:1.001,samples=150,blue] {-2*((-2*exp(10^2*(-x^2+x-1))+exp(-10^2*x^2)+exp(-10^2*(x-1)^2)+exp(-10^2)-1)/(exp(-10^2)-1))};
    %area
    \addplot [color=blue,fill=blue,fill opacity=0.05] fill between[of=lower and upper, soft clip={domain=-0.001:1.001} ];
    % mean
    \addplot[name path=mean,domain=-0.001:1.001,blue]{0};
    \end{axis}
  \end{tikzpicture}
  \begin{tikzpicture}[scale=0.35]
    \begin{axis}[ymin=-3,ymax=3,xmin=-0.01,xmax=1.01]
    % upper
    \addplot[name path=upper,domain=-0.001:1.001,samples=150,blue] {2*sqrt(((exp(x^4))^100*(exp(x^2))^100-(exp(x^3))^200)^2/((exp(x^4))^200*(exp(x^2))^200))};
    % lower
    \addplot[name path=lower,domain=-0.001:1.001,samples=150,blue] {-2*sqrt(((exp(x^4))^100*(exp(x^2))^100-(exp(x^3))^200)^2/((exp(x^4))^200*(exp(x^2))^200))};
    %area
    \addplot [color=blue,fill=blue,fill opacity=0.05] fill between[of=lower and upper, soft clip={domain=-0.001:1.001} ];
    % mean
    \addplot[name path=mean,domain=-0.001:1.001,blue]{0};
    \end{axis}
  \end{tikzpicture}%
%  \tikzexternaldisable

	\caption{%
		\glspl!{gp}, represented by their mean function and two standard deviations.
		Upper left: a \gls!{gp} $g$ with mean zero and square exponential covariance function.
		Upper middle: pushforward of the \gls!{gp} $g$ by $x\cdot(x-1)$ has a strong global influence.
		%   Upper right resp.\ lower left: pushforward of the \gls!{gp} $g$ by $1-\exp\left(10^2 x\cdot(x-1)\right)$ from \eqref{example_dirichlet_2} resp.\ $\sqrt{1+\frac{\exp(-10^2x^2)-2\exp(10^2(-x^2+x-1))+\exp(-10^2(x-1)^2)}{\exp(-10^2)-1}}$ from \eqref{example_dirichlet_3} set the functions to zero at the boundary.
		Upper right resp.\ lower left: pushforward of the \gls!{gp} $g$ by \eqref{example_dirichlet_2} resp.\ \eqref{example_dirichlet_3}.
		%   Lower middle resp.\ right: pushforward of the \gls!{gp} $g$ by $1-\exp\left(-10^2 x^2\cdot(x-1)^2\right)$ from \eqref{example_dirichlet_better_3} resp.\ $1+\frac{(\exp(-10^2x^2)-2\exp(10^2(-x^2+x-1))+\exp(-10^2(x-1)^2)}{\exp(-10^2)-1)}$  from \eqref{example_dirichlet_better_4} set the function and its derivative to zero at the boundary.
		Lower middle resp.\ right: pushforward of the \gls!{gp} $g$ by \eqref{example_dirichlet_better_3} resp.\ \eqref{example_dirichlet_better_4} set the function and its derivative to zero at the boundary.
		Set $\delta:=\frac{1}{100}$.
	}
	\label{figure_comparison_boundaries}
\end{figure}
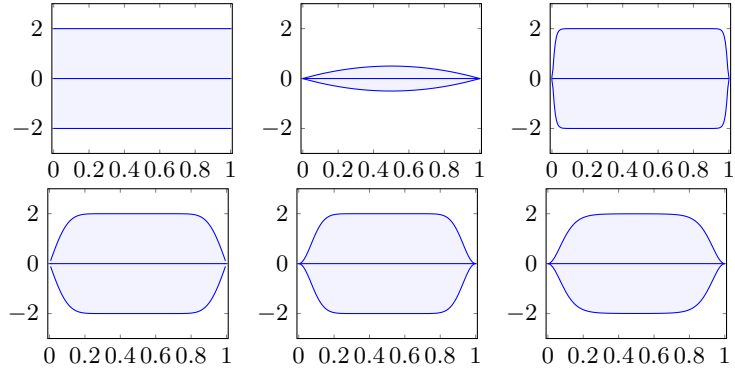

\begin{example}\label{example_boundary_derivative}
	Functions $\F=C^\infty([0,1]^d,\R)$ with Dirichlet boundary conditions $f(\partial D)=0$ and Neumann boundary condition $\frac{\partial f}{\partial n}(\partial D)=0$ for $n$ the normal to the boundary $\partial D$ of the domain $D=[0,1]^d$ are parametrized by
	\begin{align}\label{example_dirichlet_better_3}
	    B'=
	    \begin{bmatrix}
	        1-\exp\left((-1)^{d+1}\cdot\frac{\prod_{i=1}^dx_i^2(x_i-1)^2}{\delta}\right)
	    \end{bmatrix}\mbox{,}
	\end{align}
	constructed by squaring the exponent from the parametrization in \eqref{example_dirichlet_2}, or 
    \begin{align}\label{example_dirichlet_better_4}
    	B'=\begin{bmatrix}
            \prod_{i=1}^d\left(1+\frac{\exp\left(-\frac{x_i^2}{\delta}\right)-2\exp\left(-\frac{x_i^2-x_i+1}{\delta}\right)+\exp\left(-\frac{(x_i-1)^2}{\delta}\right)}{\exp\left(-\frac{1}{\delta}\right)-1}\right)
        \end{bmatrix}\mbox{,}
    \end{align}
    constructed by the squaring of the parametrization \eqref{example_dirichlet_3} for any $\delta>0$.
\end{example}

\begin{remark}\label{remark_comparison_boundaries}
    In applications, the non-polynomial parametrizations from Examples~\ref{example_dirichlet} and \ref{example_boundary_derivative} are more suitable.
    We demonstrate the effect by pushforward \glspl!{gp} obtained from these parametrizations in Figure~\ref{figure_comparison_boundaries}.
    
    The polynomial pushforward from Example~\ref{example_dirichlet} yields the variance $x^2\cdot(x-1)^2$, which strongly varies in the input interval $[0,1]$.
    The analytic pushforwards from Example~\ref{example_dirichlet} also set the variance to zero at the boundary, but quickly return to the original variance, and never exceed it.
    Even the speed of returning to the original variance can be controlled by changing the parameter $\delta$.
\end{remark}

\section{Examples}\label{section_examples}

Now, we intersect (Theorem~\ref{theorem_combining_parametrizations})  solution sets of differential equations and analytic boundary conditions (Sect.~\ref{section_boundary}) using the algorithms from Sects.~\ref{section_Ore} and \ref{section_modules}.

\begin{figure}[t]
	\centering
	\includegraphics{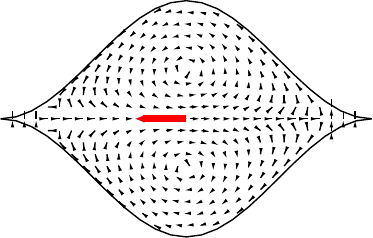}
	\includegraphics{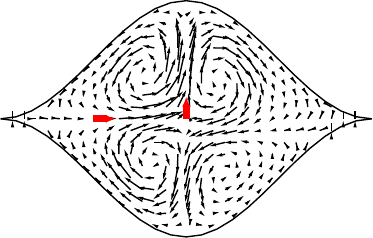}
	\caption{The mean fields of the \gls!{gp} for divergence-free fields in the interior of $y^2=\sin(x)^4$ from Example~\ref{example_double_drop}, which are conditioned on the data $(-1,0)$ at $(\frac{\pi}{2},0)$ (left) and on $(1,0)$ resp.\ $(0,1)$ at $(\frac{\pi}{4},0)$ resp.\ $(\frac{\pi}{2},0)$. The data is plotted artificially larger in red. The flow at the analytic boundary is zero.}
	\label{figure_double_drop}
\end{figure}

\begin{example}\label{example_double_drop}
    Consider divergence-free fields in the region in $\R^2$ bounded by $f:=y^2-\sin(x)^4$ for $x\in[0,\pi]$.
    Hence, consider $A=\begin{bmatrix} \partial_x & \partial_y \end{bmatrix}$, $B_1=\begin{bmatrix} \partial_y \\ -\partial_x \end{bmatrix}$ and $B_2=\begin{bmatrix} f & 0 \\ 0 & f \end{bmatrix}$.
    The Matrix $C=\begin{bmatrix} f^2 \\ \partial_y f \\ -\partial_x f \end{bmatrix}$ from Theorem~\ref{theorem_combining_parametrizations} yields the parametrization $\begin{bmatrix} \partial_yf^2 \\ -\partial_xf^2 \end{bmatrix}=\begin{bmatrix} f^2\partial_y+4\cdot f\cdot y \\ -f^2\partial_x+8\cdot f\cdot \sin(x)^3\cos(x) \end{bmatrix}$ and the push forward covariance
    \begin{align*}
        \tiny
        k\cdot f_1\cdot f_2\cdot
        \begin{bmatrix}
            f_1f_2+16y_1y_2+4\delta_y\cdot(f_1y_2-f_2y_1)-f_1f_2\delta_y^2 &
            (f_1y_1-f_1y_2+4y_1)\cdot(\delta_x\cdot f_2+\operatorname{sc}(x_2)) \\
            (f_2y_1-f_2y_2-4y_2)\cdot(\delta_x\cdot f_1+\operatorname{sc}(x_1) &
            (\operatorname{sc}(x_1)-\delta_x\cdot f_1)\cdot
            (\operatorname{sc}(x_2)+\delta_x\cdot f_2)
            -f_1f_2\cdot(2\delta_x^2-1)
        \end{bmatrix}
    \end{align*}
    of the squared exponential covariance function $k=\exp(-\frac{1}{2}((x_1-x_2)^2+(y_1-y_2)^2))$, 
    where $f_1=f(x_1,y_1)$, $f_2=f(x_2,y_2)$, $\delta_x=x_1-x_2$, $\delta_y=y_1-y_2$, and $\operatorname{sc}(x)=8\sin(x)^3\cos(x)$.
    For an illustration of this covariance see Figure~\ref{figure_double_drop}.
\end{example}

% \begin{example}
%     \todo[inline]{Rechteck mit verschiedenen Covarianzfunktionen: Diskutiere Worksheet mit Daniel}
% \end{example}

% \begin{example}
%     \todo[inline]{Äquator auf der Oberfläche der Sphäre mit verschiedenen Covarianzfunktionen}
% \end{example}

% \begin{example}
%     \todo[inline]{Maxwell (Um mal eine andere DGL und mehr als 2 dimensionen zu haben)}
% \end{example}

\begin{example}\label{example_compact_snake}
    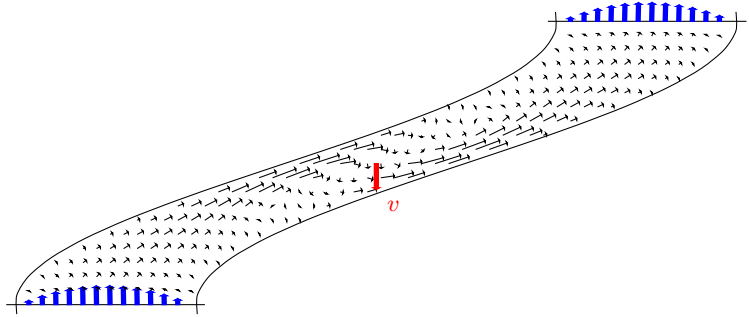
\begin{figure}[t]
        \centering
        %
%  \tikzexternalenable
%  \tikzsetnextfilename{#1}%
  	\def\len{0.06}
	\def\heada{0.5}
	\def\headb{2}
	\begin{tikzpicture}[scale=1.2,domain=0:3.14159]
		
		\draw[color=black] (2.9,pi*0.5) -- (5.1,pi*0.5);
		\draw[color=black] (-3.1,-pi*0.5) -- (-0.9,-pi*0.5);
		
		\foreach \s in {0,2cm} {
			\draw [black, xshift=\s] plot [smooth] coordinates { 
					(-2.983565686, -1.675516082)
					(-2.983565686, -1.466076572)
					(-2.853169550, -1.256637062)
					(-2.598076211, -1.047197551)
					(-2.229434477, -0.8377580412)
					(-1.763355757, -0.6283185308)
					(-1.220209929, -0.4188790204)
					(-0.6237350727, -0.2094395103)
					(0., 0.)
					(0.6237350727, 0.2094395103)
					(1.220209929, 0.4188790204)
					(1.763355757, 0.6283185308)
					(2.229434477, 0.8377580412)
					(2.598076211, 1.047197551)
					(2.853169550, 1.256637062)
					(2.983565686, 1.466076572)
					(2.983565686, 1.675516082)
			};
		}
		\foreach \y in {-1.577, -1.42,...,1.7} {
			\pgfmathsetmacro{\siny}{sin(\y r)} % the r is for radiant
			\pgfmathsetmacro{\cosy}{cos(\y r)} % the r is for radiant
			\pgfmathsetmacro{\Pimy}{pi-2*\y}
			\pgfmathsetmacro{\sqrPiqmy}{\Pimy*\Pimy}
			\pgfmathsetmacro{\Pipy}{2*\y+pi}
			\pgfmathsetmacro{\sqrPiqpy}{\Pipy*\Pipy}
			\foreach \x in {-3,-2.85,...,5} {
			%\foreach \x in {-3,-2.8,...,5} {
				\pgfmathparse{(\x-3*\siny)*(\x-2-3*\siny) < -0.1 ? int(1) : int(0)}
				\ifthenelse{\pgfmathresult=1}{
					\pgfmathsetmacro{\lbound}{(3*\siny-\x)}
					\pgfmathsetmacro{\rbound}{\lbound+2}
					\pgfmathsetmacro{\sqrlqbound}{\lbound*\lbound}
					\pgfmathsetmacro{\sqrrqbound}{\rbound*\rbound}
					\pgfmathsetmacro{\vx}{
						((.1077928138*\rbound*\lbound*\sqrPiqmy*\sqrPiqpy*(\lbound+\rbound)*(\x-1)-.1385907618*\rbound*\lbound*\sqrPiqmy*\sqrPiqpy*(\lbound+\rbound)*\y)*\cosy+(-.1796546897e-1*\sqrrqbound*\sqrlqbound*\sqrPiqmy*\sqrPiqpy*\y+.7186187587e-1*\sqrrqbound*\sqrlqbound*\Pimy*\Pipy*(\Pimy-1.*\Pipy))*(\x-1)+.2309846030e-1*\sqrrqbound*\sqrlqbound*\sqrPiqmy*\sqrPiqpy*\y*\y-.9239384121e-1*\sqrrqbound*\sqrlqbound*\Pimy*\Pipy*(\Pimy-1.*\Pipy)*\y-.2309846030e-1*\sqrrqbound*\sqrlqbound*\sqrPiqmy*\sqrPiqpy)*exp(-0.5*(\x-1)*(\x-1)-0.5*\y*\y) -2.250000000*\cosy*\rbound*\lbound
					}
					\pgfmathsetmacro{\vy}{
						(.1796546897e-1*\sqrrqbound*\sqrlqbound*\sqrPiqmy*\sqrPiqpy*(\x-1)*(\x-1)+(-.2309846030e-1*\sqrrqbound*\sqrlqbound*\sqrPiqmy*\sqrPiqpy*\y+.3593093794e-1*\rbound*\lbound*\sqrPiqmy*\sqrPiqpy*(\lbound+\rbound))*(\x-1)-.4619692061e-1*\rbound*\lbound*\sqrPiqmy*\sqrPiqpy*(\lbound+\rbound)*\y-.1796546897e-1*\sqrrqbound*\sqrlqbound*\sqrPiqmy*\sqrPiqpy)*exp(-0.5*(\x-1)*(\x-1)-0.5*\y*\y) -.7500000000*\rbound*\lbound
					}
					\ifthenelse{\lengthtest{\y pt<-1.55 pt} \OR \lengthtest{\y pt>1.55 pt}}{
						\draw[-{Latex[length=2*\heada,width=2*\headb]},blue,line width =2]  (\x,\y) -- (\x+5*\len*\vx, \y+5*\len*\vy);
					}{
						\draw[-{Latex[length=\heada,width=\headb]},black]  (\x,\y) -- (\x+\len*\vx, \y+\len*\vy);
					}
				}{};
			}
		}
		\draw[-{Latex[length=2*\heada,width=2*\headb]},red,line width=2] (1.0,0.0) -- (1.0,-5*\len) node[below right]{\color{red}$v$};
	\end{tikzpicture}%
%  \tikzexternaldisable

        \caption{
        	The mean fields of the \gls!{gp} for divergence-free fields from Example~\ref{example_compact_snake}, which are conditioned on $v=(0,-1)$ at $(0,1)$.
        	The flow at the left and right boundary is zero, at the bottom resp.\ top there is flow into resp.\ out of the region.
        	Both data point resp.\ inhomogeneous boundary conditions are plotted artificially larger in red resp.\ blue.
    	}
	    \label{figure_compact_snake}
    \end{figure}
    
    Consider divergence-free fields in the compact domain $D$ bounded by $-\frac{\pi}{2}\le y\le\frac{\pi}{2}$ and $3\sin(y)\le x\le3\sin(y)+2$.
    Hence, consider $A=\begin{bmatrix} \partial_x & \partial_y \end{bmatrix}$, $B_1=\begin{bmatrix} \partial_y \\ -\partial_x \end{bmatrix}$ and $B_2=\begin{bmatrix} f & 0 \\ 0 & f \end{bmatrix}$ for $f=(y-\frac{\pi}{2})\cdot(y+\frac{\pi}{2})\cdot(x-3\sin(y))\cdot(x-3\sin(y)-2)$.
    As the first entry in the column $C$ is $f^2$, such fields can be parametrized by $\begin{bmatrix} \partial_yf^2 \\ -\partial_xf^2 \end{bmatrix}$.
    Pushing forward the squared exponential covariance function yields a covariance too big to display.
    
    To encode non-zero boundary conditions we use a non-zero mean.
    Using the potential $p:=-\frac{1}{4}\cdot(3\sin(y)-x+3)\cdot(3\sin(y)-x)^2$ yields the divergence-free 
    \begin{align*}
    	\mu:=
    	\begin{bmatrix}
    		-\frac{9}{4} \cdot \cos(y) \cdot (3 \sin(y)-x+2) \cdot (3 \sin(y)-x)\\
    		-\frac{3}{4} \cdot (3 \sin(y)-x+2) \cdot (3 \sin(y)-x)
    	\end{bmatrix}
    	=
    	\begin{bmatrix}
    		\partial_y\\
    		-\partial_x
    	\end{bmatrix}
    	p
    \end{align*}
	satisfying the left and right boundaries 
    %$3\sin(y)\le x\le3\sin(y)+2$
    and non-zero flow through at top and bottom.
    The \gls!{gp} $\GP(\mu,k)$ hence models of divergence-free fields in $D$ with no flow on the sinoidal boundary left or right, but flow into $D$ from the bottom and out of $D$ at the top of the region.
    %As this flow into and out of the region each integrates to $1$, the upwards flow for each realization of the \gls!{gp} at each constant $y$ values also integrates to $1$.
    See Figure~\ref{figure_compact_snake} for a demonstration.
\end{example}

% \begin{example}
%     \todo[inline]{Topology (flow in unbounded snake)}
% \end{example}

% \begin{example}
%     \todo[inline]{Parameter estimation (where is the boundary? which form does the boundary have?)}
% \end{example}

\section*{Acknowledgements}

The authors thank Andreas Besginow for discussions and the anonymous reviewers for helpful comments, both of which improved the contents of this paper.
\bibliographystyle{plain}

\end{document}